\definecolor{cite_color}{RGB}{0, 51, 153}
\definecolor{link_color}{RGB}{153, 0,0}  
\definecolor{url_color}{RGB}{153, 102,  0}
\definecolor{emp_color}{RGB}{0,0,255}
\def\X{{\cal X}}
\def\A{{\mathscr A}}
\def \v{\bm{v}}
\def \b{\bm{b}}
\def \a{\bm{a}}
\def \x{\bm{x}}
\def \y{\bm{y}}
\def \bu{\bm{u}}  
\def \bmA{\mathbf{A}}
\def \bmH{\mathbf{H}}
\def \bmchi{\bm{\chi}}
\def \P{{\cal{P}}}
\def \R{{\mathbb{R}}}
\def \trans{\top}
\def \chara{\bm{\chi}} 
\def \h{\bm{h}}
\def \cg{L}
\newcommand{\opt}{\bm{x}^*}
\newcommand{\opti}[1]{x^*_{#1}} 
\newcommand{\groundset}{E}
\newcommand{\NEWDR}{\texttt{weak DR}}
\newcommand{\ALG}{Algorithm }
\newcommand{\SUPP}{{Appendix }}
\newcommand{\SEC}{{Section }}
\newcommand{\EQ}{Equation }
\newcommand{\FIG}{Figure }
\newcommand{\argmax}{{\arg\max}}
\newcommand{\algname}[1]{{\textsc{#1}}}
\newcommand{\spt}[1]{{\text{supp}}(#1)}
\newcommand{\dtp}[2]{\langle #1, #2\rangle}
\newcommand{\fracpartial}[2]{\frac{\partial #1}{\partial #2}}
\newcommand{\fracppartial}[2]{\frac{\partial^2 #1}{\partial #2}}
\newcommand{\sete}[3]{\bm #1|#1_{#2}\!\!\gets\!\! #3}   
\newtheorem{theorem}{Theorem}
\newtheorem{lemma}{Lemma}
\newtheorem{proposition}{Proposition}
\newtheorem{corollary}{Corollary}
\newtheorem{definition}{Definition}
\title{\LARGE Guaranteed Non-convex Optimization: \\
Submodular Maximization over Continuous Domains\thanks{Appears in the $20^{{th}}$ International Conference on Artificial Intelligence and Statistics (AISTATS) 2017, Fort Lauderdale, Florida, USA.}
}
\author{
Andrew	An Bian\thanks{{\color{blue}{Now known as Yatao A. Bian. ORCID: \href{https://orcid.org/0000-0002-2368-4084}{orcid.org/0000-0002-2368-4084}}}} \\
	ETH Z{u}rich \\
	\texttt{ybian@inf.ethz.ch}\\
	 \And
	Baharan Mirzasoleiman\\
	ETH Z{u}rich\\
	\texttt{baharanm@inf.ethz.ch} \\
	 \AND
	Joachim M. Buhmann\\
	ETH Z{u}rich\\
	\texttt{jbuhmann@inf.ethz.ch} \\
	 \And
	Andreas Krause\\
	ETH Z{u}rich\\
	\texttt{krausea@ethz.ch} \\
	\AND
\normalfont{\large December 9, 2016}
}
\begin{document}

\maketitle

\vspace{1cm}

 \begin{abstract}
 \noindent 
   \textit{Submodular continuous functions} are a category of
    (generally) non-convex/non-concave functions with a wide
   spectrum of applications. We characterize these functions and
   demonstrate that they can be maximized efficiently with
   approximation guarantees.
   Specifically, 
   i) We introduce  the \NEWDR\ property that gives a unified characterization of submodularity for all  set, integer-lattice and continuous functions; 
   ii) for maximizing monotone {DR-submodular} continuous functions under general  down-closed convex constraints, we propose a
   \algname{Frank-Wolfe} variant with $(1-1/e)$ approximation guarantee,
   and sub-linear convergence rate; 
   iii) for maximizing general \textit{non-monotone}
   submodular continuous functions subject to box constraints, we propose a
   \algname{DoubleGreedy} algorithm with
   $1/3$ approximation guarantee. 
   Submodular continuous functions naturally find
   applications in various real-world settings, including influence and revenue
   maximization with continuous assignments, sensor energy management,
multi-resolution data summarization, 
   facility location, 
    etc.  Experimental
   results show that the proposed algorithms efficiently generate
   superior solutions  compared to
   baseline algorithms.
\end{abstract}

\clearpage


\section{Introduction}

Non-convex optimization delineates the new frontier in machine
learning, arising in numerous learning tasks from training deep neural
networks to latent variable models \citep{anandkumar2014tensor}.  Understanding, which classes of
objectives can be tractably optimized  remains a central challenge. In this paper, we
investigate a class of generally non-convex and non-concave functions--{\it submodular continuous functions}, and derive algorithms for
approximately optimizing them with strong approximation
guarantees.

Submodularity is a structural property usually associated with {\it
  set functions}, with important implications for
optimization. Optimizing submodular set functions has found numerous
applications in machine learning, including variable selection
\citep{DBLP:conf/uai/KrauseG05}, dictionary learning
\citep{krause2010submodular,das2011submodular}, sparsity inducing regularizers
\citep{bach2010structured}, summarization
\citep{lin2011class,mirzasoleiman2013distributed} and variational
inference \citep{djolonga2014map}. Submodular set functions can be
efficiently minimized \citep{iwata2001combinatorial}, and there are
strong guarantees for approximate maximization
\citep{nemhauser1978analysis,krause2012submodular}.

Even though submodularity is most widely considered in the discrete
realm, the notion can be generalized to arbitrary lattices
\citep{fujishige2005submodular}. Recently, \cite{bach2015submodular}
showed how results from {\it submodular set function minimization} can
be lifted to the continuous domain. In this paper, we further pursue
this line of investigation, and demonstrate that results from {\it
  submodular set function maximization} can be generalized as
well. Note that the underlying concepts associated with submodular
function minimization and maximization are quite distinct, and both
require different algorithmic treatment and analysis techniques.

As motivation for our inquiry, 
we firstly give a thorough characterization of the class
of submodular and DR-submodular\footnote{A DR-submodular function is a function with the diminishing returns property, which will be formally defined in \SEC \ref{sec_scf}.} functions in \SEC \ref{sec_scf}. 
We propose the \texttt{weak DR}
property and prove that it is equivalent to  submodularity for general functions. This resolves the question  whether there exists a diminishing-return-style characterization that is equivalent to   submodularity for  all set,   integer-lattice and continuous functions.
We then present two guaranteed
algorithms for maximizing submodular continuous functions in Sections \ref{sec_mono_dr_fun} and \ref{sec_nonmono_2greedy}, respectively.  The first approach, based on the Frank-Wolfe algorithm \citep{frank1956algorithm} and the continuous greedy
algorithm \citep{DBLP:conf/stoc/Vondrak08},
applies to { monotone} DR-submodular functions. It provides a $(1-1/e)$
approximation guarantee under \textit{general} down-closed convex  constraints. We also
provide a second, coordinate-ascent-style algorithm,  which applies to
arbitrary submodular continuous function maximization under box constraints,  and provides a $1/3$ approximation
guarantee. This algorithm is based on the double greedy algorithm \citep{buchbinder2012tight} from
submodular set function maximization. 
In \SEC \ref{sec_app} we  illustrate how submodular
continuous maximization captures various important  applications, ranging from
sensor energy management, 
to influence and revenue maximization, to facility location, 
and non-convex/non-concave quadratic
programming.
Lastly, we
experimentally demonstrate the effectiveness of our algorithms on
several problem instances in \SEC \ref{sec_exp}.

\section{Background and related work}

\paragraph{Notions of submodularity.}
Submodularity is often viewed as a discrete analogue of convexity,
and provides computationally effective structure so that many
discrete problems with this property are efficiently solvable or
approximable.  Of particular interest is a $(1-1/e)$-approximation for
maximizing a monotone submodular set function subject to a cardinality, a
matroid, or a knapsack constraint \citep{nemhauser1978analysis,
  DBLP:conf/stoc/Vondrak08, sviridenko2004note}. 
Another result relevant to this work is unconstrained maximization of  non-monotone submodular set 
functions, for which
\cite{buchbinder2012tight} propose  the   deterministic double greedy algorithm with  1/3 approximation
guarantee, and the randomized double greedy algorithm which achieves  
the tight 1/2 approximation guarantee. 

Although most commonly associated with set functions, in many
practical scenarios, it is natural to consider generalizations of
submodular set functions.
%
\cite{golovin2011adaptive} introduce the notion of adaptive submodularity to
generalize submodular set functions to adaptive policies. 
\cite{kolmogorov2011submodularity} studies tree-submodular functions
and presents a polynomial algorithm for minimizing them. 
For distributive lattices, it is well-known that the combinatorial
polynomial-time algorithms for minimizing a submodular set function
can be adopted to minimize a submodular function over a bounded
integer lattice \citep{fujishige2005submodular}. 
Recently, maximizing
a submodular function over integer lattices has attracted considerable
attention. In particular,  \cite{soma2014optimal} develop  a $(1-1/e)$-approximation algorithm for maximizing a monotone DR-submodular integer-lattice function under a
knapsack constraint. For non-monotone
submodular functions over the bounded integer lattice, \cite{gottschalk2015submodular} provide a 1/3-approximation. Approximation algorithms for
maximizing bisubmodular functions and 
$k$-submodular functions have also
been  proposed by \cite{singh2012bisubmodular,ward2014maximizing}.

\citet{DBLP:journals/mor/Wolsey82} considers maximizing a special class of  submodular continuous functions subject to one knapsack constraint, in
the context of solving  location problems. That class of functions  are additionally required to be monotone, piecewise linear and 
concave. 
\cite{calinescu2007maximizing} and \citet{DBLP:conf/stoc/Vondrak08} discuss 
a subclass of submodular continuous functions, 
which is termed  smooth submodular
functions\footnote{A function $f: [0,1]^n \rightarrow \R$ is smooth submodular if it has second partial derivatives everywhere and all entries of its Hessian matrix are non-positive.}, 
to describe   the multilinear extension of a submodular set function.  They propose  the continuous greedy algorithm, which 
has a $(1-1/e)$ approximation guarantee  on maximizing a smooth
submodular functions under a down-monotone polytope constraint. 
Recently, \cite{bach2015submodular} considers the
minimization of a submodular continuous function, and proves that
efficient techniques from convex optimization may be used for
minimization.  Very recently, \cite{ene2016reduction} provide a   reduction from a
 integer-lattice DR-submodular function maximization problem to a submodular set function maximization problem, which  suggests  a  way to optimize submodular continuous functions over \textit{simple} continuous constriants: Discretize the continuous function and constraint to be an integer-lattice instance, and then optimize it using the reduction. However, 
 for monotone DR-submodular functions maximization,  this method
 can not handle the general continuous constraints discussed in this work, i.e., arbitrary  down-closed convex sets. And for general submodular function maximization,  this method cannot be applied, since the reduction needs the additional  diminishing returns property.  
 Therefore we focus on  continuous methods in this work.  
 
 \paragraph{Non-convex optimization.}
Optimizing  non-convex continuous functions has received renewed
interest in the last decades. 
Recently,
 tensor methods have been used in various non-convex problems, 
e.g., 
learning latent variable models \citep{anandkumar2014tensor} and training neural networks \citep{janzamin2015beating}.
A fundamental problem in non-convex optimization is to reach a stationary point assuming the smoothness of the objective \citep{sra2012scalable,li2015accelerated,reddi2016fast,allen2016variance}.
With extra  assumptions, certain global
convergence results can be obtained. For example, for functions with Lipschitz continuous
Hessians, the regularized Newton scheme of \cite{nesterov2006cubic}
achieves global convergence results for functions 
 with an additional star-convexity property or with an additional gradient-dominance
 property \citep{polyak1963gradient}.  \cite{hazan2015graduated} introduce
 the family of $\sigma$-nice functions and propose a graduated optimization-based algorithm, that
 provably converges to a global optimum for this family of (generally) non-convex functions. 
However, it is typically difficult to verify whether these assumptions hold in real-world problems.  

To the best of our knowledge, this work is the \emph{first} to address the
general problem of monotone and non-monotone submodular maximization over continuous domains. It is also the first 
to propose a sufficient and necessary diminishing-return-style characterization of submodularity for general functions. 
We  propose efficient algorithms with strong approximation guarantees.
We further show that continuous submodularity is a common property of many well-known objectives  and finds various real-world applications.

\paragraph{Notation.}
Throughout this work we assume   $E=\{e_1,e_2,\cdots,
		        e_n\}$ is the ground set of $n$ elements, and
		        $\chara_i\in\R^n$ is the characteristic vector for element $e_i$.  
		        We use boldface letters  $\x\in \R^E$ and $\x\in \R^n$ interchanglebly to indicate  a $n$-dimensional vector, where
		        $x_i$ is the $i$-th entry of $\x$. We use a boldface captial letter $\bmA\in\R^{m\times n}$ to denote a matrix. 
For two vectors $\x,\y\in \R^E$, $\x\leq \y$ means $x_i\leq y_i$ 
for every element $i$ in $E$.
Finally, $\sete{x}{i}{k}$ is the operation of setting the $i$-th element of $\x$
			 to $k$, while keeping all other elements unchanged. 

\section{Characterizations of  submodular continuous functions}
\label{sec_scf}
Submodular continuous functions are defined on   subsets of $\R^n$:
$\X = \prod_{i=1}^n \X_i$, where each $\X_i$ is a compact
subset of $\mathbb{R}$ \citep{topkis1978minimizing,
  bach2015submodular}.
A function $f: \X \rightarrow \R$ is submodular
\textit{iff} for all $(\x, \y)\in \X \times \X$,
\begin{equation}\label{eq1}
  f(\x) + f(\y) \geq f(\x \vee \y) + f(\x \wedge \y),  \quad (\texttt{submodularity})
\end{equation}
where $\wedge$ and $\vee$ are the coordinate-wise minimum and
maximum operations, respectively.  
 Specifically, $\X_i$ could be a finite set, such as $\{0, 1 \}$ (in which case $f(\cdot)$ is called \textit{set} function), or $\{0, \cdots, k_i-1 \}$ (called \textit{integer-lattice} function), 
 where
the notion of continuity is vacuous;  
$\X_i$ can also be  an interval, which is referred to  as a continuous domain. In this work, 
 we consider the interval by default, but it is worth noting that
 the properties introduced in this section can be applied to $\X_i$ being a general compact subset of $\R$. 

When twice-differentiable, $f(\cdot)$
is submodular iff all
off-diagonal entries of its Hessian 
are non-positive\footnote{Notice that an equilavent definition of (\ref{eq1}) is that $\forall \x\in \X$, $\forall i \neq j$ and $a_i, a_j\geq 0$ s.t. $x_i +a_i\in \X_i, x_j+a_j\in \X_j$, it holds $f(\x+a_i\bmchi_i) + f(\x+a_j\bmchi_j) \geq f(\x) + f(\x+a_i\bmchi_i + a_j\bmchi_j)$. With $a_i$ and $a_j$ approaching zero, one  get (\ref{eq2}).} \citep{bach2015submodular},
\begin{equation}\label{eq2}
  \forall \x\in \X, \;\; \frac{\partial^2 f(\x)}{\partial x_i \partial x_j}
  \leq 0, \;\; \forall i \neq j. 
\end{equation}

\begin{wrapfigure}[10]{r}{0.3\textwidth}
	\vspace{-1.1cm}  
	\includegraphics[width=0.3\textwidth]{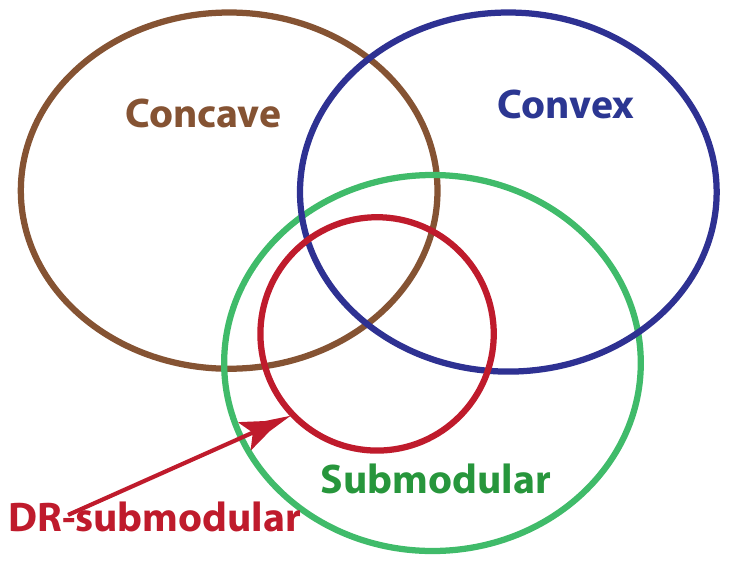}
	\caption{\small Concavity, convexity, submodularity and DR-submodularity.}
	\label{fig_sub}
\end{wrapfigure} 
The class of submodular continuous functions contains a subset of both
convex and concave functions, and shares some useful properties with
them (illustrated in \FIG \ref{fig_sub}).  
Examples include
submodular and convex functions of the form $\phi_{ij}(x_i - x_j)$ for
$\phi_{ij}$ convex; 
submodular and concave functions of the form $\x\!
\mapsto\! g(\sum_{i=1}^{n} \lambda_i x_i)$ for
$g$
concave and $\lambda_i$
non-negative (see \SEC \ref{sec:revenue} for example applications).
Lastly, indefinite quadratic functions of the form $f(\x)
= \frac{1}{2} \x^\trans \bmH \x + \h^\trans \x$ with all off-diagonal entries of
$\bmH$
non-positive are examples of submodular but non-convex/non-concave
functions. Continuous submodularity is preserved under various operations, 
 e.g.,
the sum of two submodular continuous functions is submodular, a submodular
continuous function multiplied by a positive scalar is still submodular. 
Interestingly,  characterizations of  submodular continuous functions are in correspondence to those of convex functions,  which are  summarized in Table \ref{tab_comparison}.
\begin{table}[t]
	\begin{center}
		\begin{tabular}{|l|l|l|l|c|c|}
			\hline
			Properties &    Submodular continuous function $f(\cdot)$ & Convex function $g(\cdot)$, $\forall \lambda\in [0,1]$ \\
			\hline
			\hline 
			$0^{\text{th}}$ order  &  $f(\x) + f(\y) \geq f( \x \vee \y) + f(\x \wedge \y)$ &  $\lambda g(\x) + (1-\lambda)g(\y) \geq g(\lambda \x + (1-\lambda) \y)$   \\
			\hline 
			$1^{\text{st}}$ order & {\texttt{weak DR} (\textcolor{red}{this work}, Definition \ref{def_weakdr})} & $g(\y) - g(\x)\geq  \dtp{\nabla g(\x)}{\y-\x}$   \\
			\hline 
			$2^{\text{nd}}$ order & $\frac{\partial^2 f(\x)}{\partial x_i \partial x_j} \leq 0, \forall i \neq j$  & $\nabla^2 g(\x)  \succeq 0$ (positive semidefinite)   \\
			\hline
		\end{tabular}
		\caption{Comparison of properties of submodular and convex continuous functions}
		\label{tab_comparison}
	\end{center}
\end{table}

In the remainder of this section, we introduce  useful properties of  submodular continuous functions. 
First of all, we generalize the 
\texttt{DR} property (which was introduced
 when  studying  set and   integer-lattice functions) to general functions defined over $\X$. It will soon be clear that the 
\texttt{DR} property defines a subclass of
submodular functions. 

\begin{definition}[\texttt{DR} property and DR-submodular functions]\label{def_dr}
	A function $f(\cdot)$ defined over $\X$ satisfies the  \textit{diminishing returns} (DR)
	property  if  $\forall  \a\leq \b\in \X$, $\forall i\in E$, $\forall k\in \R_+$ s.t. ($k\chara_i+ \a$) and ($k\chara_i + \b$) are still in $\X$, it holds, 
	\begin{equation}\notag 
	f(k\chara_i+ \a) - f(\a) \geq f(k\chara_i + \b) - f(\b).   
	\end{equation}
	This function $f(\cdot)$ is called a  DR-submodular\footnote{Note that DR property implies submodularity and thus the name ``DR-submodular" contains redundant information about submodularity of a function, but we keep this terminology to be consistent with previous literature on submodular integer-lattice functions.} function.
\end{definition}
One immediate observation is that for a differentiable DR-submodular continuous function $f(\cdot)$, we have that $\forall  \a\leq \b\in \X$, $\nabla f(\a)\geq \nabla f(\b)$, i.e., the gradient  $\nabla f(\cdot)$ is an \emph{antitone} mapping from $\R^n$ to $\R^n$. Recently, the \texttt{DR} property is explored by \citet{NIPS2016_6073}  to achieve the 
worst-case competitive ratio for an online  concave maximization problem.  
The \texttt{DR} property is also closely  related to a sufficient condition on 
a concave function $g(\cdot)$ 
\citep[Section 5.2]{bilmes2017deep},  to ensure submodularity of the corresponding 
set function  generated by giving  $g(\cdot)$ boolean input vectors. 


%
It is well known that for set functions, the \texttt{DR} property is equivalent to submodularity, while for 
integer-lattice functions, submodularity does not in general imply the \texttt{DR} property \citep{soma2014optimal,DBLP:conf/nips/SomaY15,soma2015maximizing}. However, it was unclear  whether there exists a diminishing-return-style characterization that is equivalent to  submodularity of  integer-lattice functions. In this work we give a positive answer to this open problem by  proposing the \textit{weak diminishing returns}
(\texttt{weak DR}) property  for general functions defined over $\X$, and prove that \texttt{weak DR} gives a sufficient and necessary condition for a general function to be  submodular.

\begin{definition}[\texttt{weak DR} property]\label{def_weakdr}
	  A  function $f(\cdot)$ defined over $\X$ has the \textit{weak diminishing returns} 
	property (\texttt{weak DR})  if 
	$\forall \a\leq \b\in \X$, $\forall i\in E \text{ s.t. } a_{i} = b_{i}, \forall k\in \R_+$ s.t. $(k\chara_i+\a)$ and $(k\chara_i+\b)$ are still in $\X$, it holds, 
		\begin{equation} \label{def_supp_dr2}
			f(k\chara_i+\a) - f(\a) \geq f(k\chara_i + \b) - f(\b).
		\end{equation}
\end{definition}
%
The following proposition  shows that for all set functions, as well as integer-lattice and continuous  
functions, submodularity is equivalent to   the \NEWDR\
property.
\begin{proposition}[\texttt{submodularity}) $\Leftrightarrow$
  (\NEWDR]\label{lemma_support_dr} A function
  $f(\cdot)$ defined over $\X$
  is submodular iff it satisfies the \textit{weak
    DR} property.
\end{proposition}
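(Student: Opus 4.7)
The plan is to prove both directions.

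For the forward direction (submodularity $\Rightarrow$ \texttt{weak DR}), I would fix $\a \leq \b \in \X$ with $a_i = b_i$ and $k \geq 0$, and then instantiate the submodularity inequality at the specific pair $\x := \a + k\chara_i$ and $\y := \b$. A coordinate-by-coordinate check (using $a_i = b_i$ and $a_j \leq b_j$ for $j\neq i$) shows $\x \vee \y = \b + k\chara_i$ and $\x \wedge \y = \a$, so $f(\a+k\chara_i)+f(\b)\geq f(\b+k\chara_i)+f(\a)$, which rearranges to weak DR.

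For the reverse direction (\texttt{weak DR} $\Rightarrow$ submodularity), I would use a telescoping argument. Fix $\x, \y \in \X$, let $D := \{i \in E : x_i > y_i\}$, and enumerate $D = \{i_1, \dots, i_m\}$. Define two sequences with $\a^{(0)} := \x \wedge \y$, $\b^{(0)} := \y$, and inductively $\a^{(j)} := \a^{(j-1)} + k_j\chara_{i_j}$, $\b^{(j)} := \b^{(j-1)} + k_j\chara_{i_j}$, where $k_j := x_{i_j} - y_{i_j} > 0$. By induction, at each step $\a^{(j-1)} \leq \b^{(j-1)}$ (each update adds the same amount on the same coordinate) and coordinate $i_j$ has not yet been touched, so $a^{(j-1)}_{i_j} = y_{i_j} = b^{(j-1)}_{i_j}$. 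Also, since $\X$ is a product of compact intervals and every intermediate coordinate stays between the corresponding coordinates of $\y$ and $\x \vee \y$, all $\a^{(j)}, \b^{(j)}$ lie in $\X$. Applying weak DR at each step yields
\begin{equation*}
f(\a^{(j)}) - f(\a^{(j-1)}) \geq f(\b^{(j)}) - f(\b^{(j-1)}), \quad j=1,\dots,m.
\end{equation*}
Summing telescopically gives $f(\a^{(m)}) - f(\a^{(0)}) \geq f(\b^{(m)}) - f(\b^{(0)})$. A final verification shows $\a^{(m)} = \x$ and $\b^{(m)} = \x \vee \y$ (coordinates in $D$ reach $x_i$; the rest are unchanged), which rearranges to $f(\x)+f(\y)\geq f(\x\vee\y)+f(\x\wedge\y)$.

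The main subtlety (and why the statement is nontrivial) is that at each step we can only invoke weak DR, not full DR: we must ensure the changed coordinate $i_j$ agrees in $\a^{(j-1)}$ and $\b^{(j-1)}$, and other coordinates may differ. This is precisely what the construction guarantees by processing $D$ one index at a time starting from the meet $\x\wedge\y$ and from $\y$ in parallel. The remaining work is purely bookkeeping—tracking which coordinates have been modified, verifying membership in $\X$, and observing that $|D| \leq n$ is finite so the induction terminates, independently of whether $\X$ is discrete or continuous.
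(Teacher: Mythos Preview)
Your proposal is correct and follows essentially the same approach as the paper. For the forward direction, the paper first establishes an equivalent ``Formulation II'' of \texttt{weak DR} and instantiates submodularity there, whereas you instantiate it directly at $\x=\a+k\chara_i$, $\y=\b$; both are the same idea. For the reverse direction, the paper's telescoping argument is identical to yours up to swapping the roles of $\x$ and $\y$ (the paper takes $D=\{e:y_e>x_e\}$, $\a^0=\x\wedge\y$, $\b^0=\x$, ending at $\a^d=\y$, $\b^d=\x\vee\y$).
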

All of the proofs can be found in  \SUPP\ \ref{app_proof}.
Given
Proposition \ref{lemma_support_dr}, 
%
 one can treat \NEWDR\ as the first order condition of submodularity: 
 Notice that for a differentiable continuous function $f(\cdot)$ with the 
 \texttt{weak DR} property,   we have that $\forall  \a\leq \b\in \X$, $\forall i\in E \text{ s.t. } a_{i} = b_{i}$, it holds   $\nabla_i f(\a)\geq \nabla_i f(\b)$,  i.e., $\nabla f(\cdot)$ is a weak antitone mapping. 
 %
 %
%
Now we show that the \texttt{DR} property is 
stronger than the \texttt{weak DR} property, and 
the class of DR-submodular functions is a proper subset of that of submodular functions, as indicated  by \FIG \ref{fig_sub}.
\begin{proposition}[\texttt{submodular/weak DR}) + (\texttt{coordinate-wise concave}) 
	$\Leftrightarrow$ (\texttt{DR}]\label{lemma_dr}
A function $f(\cdot)$ defined over $\X$ satisfies the \texttt{DR} property iff
$f(\cdot)$ is submodular  and  coordinate-wise concave,
where the 
\texttt{coordinate-wise concave} property is defined as:  $\forall \x\in \X$, $\forall i\in E$, $\forall k, l\in \R_+$
s.t.  $(k\chara_i+\x), (l\chara_i + \x), ((k+l)\chara_i + \x)$ are still in $\X$, it holds, 
$$f(k\chara_i+\x) - f(\x) \geq f((k+l)\chara_i + \x) - f(l\chara_i + \x),$$
or equivalently (if twice differentiable)
$\frac{\partial^2 f(\x)}{\partial x_i^2} \leq 0, \forall i\in E$.
\end{proposition}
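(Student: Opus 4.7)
The plan is to prove both directions by unwinding the definitions and using Proposition~\ref{lemma_support_dr} to replace ``submodular'' with ``\texttt{weak DR}'' whenever convenient. The \texttt{DR} property differs from \texttt{weak DR} only in that \texttt{DR} drops the restriction $a_i = b_i$; so intuitively the missing piece that upgrades \texttt{weak DR} to \texttt{DR} is exactly concavity along the $i$-th coordinate starting from $\a$.

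For the forward direction ($\texttt{DR} \Rightarrow$ submodular $+$ coordinate-wise concave), submodularity is immediate: \texttt{DR} trivially implies \texttt{weak DR} (the latter is \texttt{DR} restricted to pairs with $a_i = b_i$), and by Proposition~\ref{lemma_support_dr} \texttt{weak DR} is equivalent to submodularity. For coordinate-wise concavity, given $\x\in\X$, $i\in E$, and $k,l\in\R_+$ with the relevant points in $\X$, I would apply \texttt{DR} with the substitution $\a \leftarrow \x$ and $\b \leftarrow l\chara_i + \x$ (note $\a\leq\b$). This yields
\[
f(k\chara_i+\x) - f(\x) \;\geq\; f(k\chara_i + l\chara_i + \x) - f(l\chara_i+\x),
\]
which is exactly the coordinate-wise concavity inequality. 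The equivalent second-order condition $\partial^2 f/\partial x_i^2\leq 0$ is then just the standard characterization of concavity of the univariate slice $t\mapsto f(\x + t\chara_i)$.

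For the reverse direction, I want to bridge the gap between a general pair $\a\leq\b$ (as needed for \texttt{DR}) and a pair agreeing on coordinate $i$ (as allowed by \texttt{weak DR}). Given $\a\leq\b\in\X$, $i\in E$, and $k\in\R_+$ admissible, set $l := b_i - a_i\geq 0$ and define the intermediate point $\a' := l\chara_i + \a$, which satisfies $\a'\leq\b$ and $a'_i = b_i$. Then I would chain two inequalities. First, coordinate-wise concavity (with base $\a$, increments $k$ and $l$) gives
\[
f(k\chara_i+\a) - f(\a) \;\geq\; f((k+l)\chara_i + \a) - f(l\chara_i+\a) \;=\; f(k\chara_i + \a') - f(\a').
\]
Second, \texttt{weak DR} applied to the pair $\a'\leq \b$ (which now agree on coordinate $i$) yields
\[
f(k\chara_i + \a') - f(\a') \;\geq\; f(k\chara_i+\b) - f(\b).
\]
Concatenating the two inequalities produces the \texttt{DR} inequality.

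I do not expect any real obstacle here; the main subtlety is simply the bookkeeping of domain feasibility, i.e.\ checking that the intermediate point $\a'=l\chara_i+\a$ together with $k\chara_i+\a'=(k+l)\chara_i+\a$ lies in $\X$. This follows from $\a\leq \a'\leq \b$ componentwise and the assumption that $k\chara_i+\b\in\X$ (so each coordinate lies in $\X_i$), using that $\X = \prod_i \X_i$ is a product of compact subsets of $\R$.
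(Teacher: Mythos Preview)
Your proposal is correct and follows essentially the same approach as the paper: both directions match, with the reverse direction splitting the \texttt{DR} inequality into a coordinate-wise concavity step (from $\a$ to $\a'=\a+(b_i-a_i)\chara_i$) followed by a submodularity step (from $\a'$ to $\b$). The only cosmetic difference is that for the second step the paper applies the lattice inequality $f(\x)+f(\y)\geq f(\x\vee\y)+f(\x\wedge\y)$ directly with $\x=\b$, $\y=\a+(b_i-a_i+k)\chara_i$, whereas you invoke \texttt{weak DR} via Proposition~\ref{lemma_support_dr}; these are equivalent.
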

Proposition \ref{lemma_dr} shows that a twice differentiable function $f(\cdot)$
is DR-submodular iff $\forall \x\in \X,  \frac{\partial^2 f(\x)}{\partial x_i \partial x_j}
  \leq 0, \forall i, j\in E$, which does not necessarily imply the concavity of $f(\cdot)$. Given Proposition \ref{lemma_dr},  we also have the  characterizations of
  DR-submodular continuous functions, which are summarized in  Table \ref{tab_dr}. 
  \begin{table}[tbp]
  	\begin{center}
  		\begin{tabular}{|l|l|c|c|c|c|}
  			\hline
  			Properties     &    DR-submodular continuous function $f(\cdot)$, $\forall \x, \y\in \X$\\
  			\hline
  			\hline 
  			$0^{\text{th}}$ order  &  $f(\x) + f(\y) \geq f( \x \vee \y) + f(\x \wedge \y)$, and $f(\cdot)$ is coordinate-wise concave  \\
  			\hline 
  			$1^{\text{st}}$ order &  the {\texttt{DR} property (Definition \ref{def_dr})}  \\
  			\hline 
  			$2^{\text{nd}}$ order & $\frac{\partial^2 f(\x)}{\partial x_i \partial x_j} \leq 0, \forall i , j$ (all entries of the Hessian being non-positive)  \\
  			\hline
  		\end{tabular}
  		\caption{Summarization of properties of DR-submodular continuous functions}
  		\label{tab_dr}
  	\end{center}
  \end{table}

\section{Maximizing monotone \text{DR}-submodular continuous functions}
\label{sec_mono_dr_fun}

In this section,  we present
an algorithm for maximizing a monotone  DR-submodular continuous function subject to a general down-closed convex constraint, i.e.,  $\max_{\x\in \P_{\underline{\bu}}} f(\x)$.
A down-closed convex set $\P_{\underline{\bu}}$ is a convex set $\P$ associated with a
lower bound $\underline{\bu}\in \P$, such that 1) $\forall \y\in \P$, $\underline{\bu} \leq \y$; and 2) $\forall \y\in\P$, $\x\in \R^n$,  $\underline{\bu} \leq \x\leq \y$ implies
$\x\in \P$. 
 Without loss of generality, we assume  $\P$ lies in the postitive orthant and has the lower bound $0$, since otherwise we can  always define a new set $\P' = \{\x \;|\; \x = \y - \underline{\bu}, \y\in \P \}$ in the positive orthant, and a corresponding monotone DR-submdular function $f'(\x) := f(\x + \underline{\bu})$.

Maximizing a monotone DR-submodular function over a down-closed convex constraint has many real-world applications, e.g., influence maximization with continuous 
assignments and sensor energy management. In particular, for influence maximization (see Section \ref{sec_app}), the constraint is a down-closed polytope in the positive orthant: 
$\P = \{\x\in \R^n \;|\; 0\leq \x\leq \bar \bu, \bmA\x \leq \b, \bar \bu \in \R_+^n, \bmA \in \R^{m\times n}_+, \b\in \R^m_+\}$.
We start with the following hardness result:
\begin{proposition}\label{prop_np}
The problem of maximizing a monotone  DR-submodular continuous function subject to 
a general down-closed \emph{polytope} constraint is NP-hard. 
For any $\epsilon >0$, it cannot be approximated in polynomial time within a ratio of $(1-1/e+\epsilon)$ (up to low-order terms), unless RP = NP.
\end{proposition}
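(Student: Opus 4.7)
The plan is to reduce from a known hardness result for monotone submodular set function maximization under a cardinality constraint. By a celebrated result of Feige, the maximum $k$-cover problem cannot be approximated in polynomial time within a factor better than $(1-1/e+\epsilon)$ unless $\mathrm{RP} = \mathrm{NP}$, and this problem is a special case of monotone submodular maximization over the uniform matroid of rank $k$. I would lift such an instance to the continuous setting via the multilinear extension and show that any continuous approximation algorithm beating $(1-1/e)$ would yield the same guarantee for the discrete instance, a contradiction.

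Concretely, given a max-$k$-cover instance with ground set $E$ of $n$ elements and sets $S_1,\dots,S_m$ with weights $w_j\ge 0$, define the multilinear extension
\begin{equation}\notag
F(\x) \;=\; \sum_{j=1}^m w_j\Bigl(1-\prod_{i\in S_j}(1-x_i)\Bigr), \qquad \x\in[0,1]^n,
\end{equation}
which admits an explicit polynomial-time evaluation. First I would verify that $F$ is monotone and DR-submodular on $[0,1]^n$: every partial derivative $\partial F/\partial x_i$ is nonnegative, and since every mixed second partial $\partial^2 F/\partial x_i\partial x_j$ is nonpositive while $\partial^2 F/\partial x_i^2=0$, Proposition~\ref{lemma_dr} (all Hessian entries nonpositive) certifies DR-submodularity. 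Second, I would observe that the feasible set
\begin{equation}\notag
\P \;=\; \bigl\{\x\in\R^n \;\bigm|\; 0\le \x\le \mathbf{1},\ \textstyle\sum_{i=1}^n x_i \le k\bigr\}
\end{equation}
is a down-closed polytope lying in the nonnegative orthant with lower bound $\underline{\bu}=\mathbf{0}$, so it fits the class considered in the theorem.

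Third, I would argue that the continuous and discrete optima coincide: $\max_{\x\in\P} F(\x) = \max_{|S|\le k} f(S)$, where $f(S)=\sum_{j:S\cap S_j\ne\emptyset}w_j$. The inequality $\ge$ is trivial by taking $\x$ to be the indicator of the optimal set. For the reverse inequality, I would invoke the classical pipage rounding argument of \citet{calinescu2007maximizing}: along any coordinate pair the restriction of $F$ to a line inside $\P$ is convex (because $\partial^2 F/\partial x_i\partial x_j\le 0$ implies convexity of $t\mapsto F(\x + t(\chara_i-\chara_j))$), so any fractional point in $\P$ can be rounded to an integral point in $\P$ without decreasing $F$, which then corresponds to a feasible set. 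Thus a hypothetical polynomial-time $(1-1/e+\epsilon)$-approximation for the continuous problem would compute some $\tilde\x\in\P$ with $F(\tilde\x)\ge (1-1/e+\epsilon)\max_S f(S)$; pipage rounding then yields an integral $\tilde S$ with $f(\tilde S)\ge F(\tilde\x)$, contradicting Feige's lower bound. The NP-hardness statement follows from the same reduction without the approximation factor.

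The main obstacle is the rounding step: one must ensure that the rounded integral point remains feasible in $\P$ and that the rounding preserves (or improves) the value of $F$, which relies on the coordinate-wise convexity of $F$ along feasible directions inside the matroid polytope. For the uniform matroid this is standard, but I would want to state carefully that the particular polytope $\P$ above is indeed the matroid polytope of the uniform matroid of rank $k$, so that the pipage argument applies verbatim.
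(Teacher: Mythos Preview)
Your proposal is correct and follows essentially the same route as the paper: reduce from monotone submodular set-function maximization under a cardinality constraint (specifically max-$k$-cover, to invoke Feige's $(1-1/e)$ inapproximability) by passing to the multilinear extension over the uniform-matroid polytope, observing that this yields a monotone DR-submodular continuous instance over a down-closed polytope, and then using pipage rounding to convert any improved continuous approximation back into a discrete one. If anything, your version is slightly more careful in that it fixes the coverage objective so the multilinear extension has an explicit closed form evaluable in polynomial time, whereas the paper leaves this implicit.
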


Due to the NP-hardness of converging to the global optimum, in the following by ``convergence" we 
mean converging to a point near the global optimum. 
The algorithm is a generalization of    the continuous
greedy algorithm of \cite{DBLP:conf/stoc/Vondrak08} for maximizing a smooth submodular  function, and related to  the convex
Frank-Wolfe algorithm \citep{frank1956algorithm,DBLP:conf/icml/Jaggi13} for minimizing a
convex function. 
\begin{algorithm}[ t]
	\caption{\algname{Frank-Wolfe} variant  for monotone
		{DR}-submodular function
		maximization}\label{alg_sfmax_GradientAscend}
	\KwIn{$\max_{\x\in \P} f(\x)$,
		$\P$ is a down-closed {convex} set in the positive orthant with lower bound $0$,
		prespecified stepsize $\gamma \in (0, 1]$}
	{$\x^0 \leftarrow 0$, $t\leftarrow 0$, $k\leftarrow 0$\tcp*{$k:$ iteration index}}
	\While{$t <  1$}{
		{find $\v^k \text{ s.t. }    \dtp{\v^k}{\nabla f(\x^k)} \geq   \alpha\max_{\v\in\P} \dtp{\v}{ \nabla f(\x^k)} - \frac{1}{2}\delta \cg$\tcp*{$L>0$ is the Lipschitz parameter, $\alpha\in(0, 1]$ is the mulplicative error level, $\delta\in [0, \bar \delta]$ is the additive error level}\label{fw_sub}}
		{find stepsize $\gamma_k\in (0, 1]$, e.g.,  $\gamma_k \leftarrow \gamma $;  
		set $\gamma_k \leftarrow \min\{\gamma_k, 1-t\}$\;}
		{$\x^{k+1}\leftarrow \x^k + \gamma_k \v^k$, $t \leftarrow t + \gamma_k$,  $k\leftarrow k+1$\;\label{step_update}}
	}
	{Return  $\x^K$\tcp*{assuming there are $K$ iterations in total}}
\end{algorithm}
We summarize the \algname{Frank-Wolfe} variant  in \ALG \ref{alg_sfmax_GradientAscend}. 
In each iteration $k$, the algorithm uses the linearization of $f(\cdot)$ as a surrogate, and moves in the direction of the
maximizer of this surrogate function, i.e.
$\v^k=\arg\max_{\v \in \P} \dtp{\v}{ \nabla f(\x^k)}$.
Intuitively, we search for the
direction in which we can maximize the improvement in the function
value and still remain   feasible. 
Finding such a
direction requires maximizing a linear objective at each
iteration.
Meanwhile, it eliminates the need for projecting back to the feasible set
in each iteration, which is an essential step for methods such as
projected gradient ascent. 
The algorithm uses stepsize $\gamma_k$ to update the solution in each iteration, which  can be simply a prespecified value $\gamma$.
%
%
Note that the \algname{Frank-Wolfe} variant can tolerate both multiplicative
error $\alpha$ and additive error $\delta$ when solving the 
subproblem (Step \ref{fw_sub} of \ALG \ref{alg_sfmax_GradientAscend}). Setting $\alpha = 1$ and $\delta = 0$, we recover the error-free case. 

Notice that  the   \algname{Frank-Wolfe} variant  in \ALG \ref{alg_sfmax_GradientAscend} is different from the convex Frank-Wolfe
algorithm mainly in the update direction being used: For  \ALG \ref{alg_sfmax_GradientAscend}, the
update direction (in Step \ref{step_update}) is $\v^k$, while for  convex Frank-Wolfe it is $\v^k - \x^k$, i.e., $\x^{k+1}\leftarrow \x^k + \gamma_k(\v^k - \x^k)$. 
The reason of this difference will soon be clear by exploring the 
property of DR-submodular functions. Specifically, 
\text{DR}-submodular functions are non-convex/non-concave in
general, however, there is certain connection
between DR-submodularity and concavity.
\begin{proposition}\label{prop_concave}
	A {DR}-submodular continuous function $f(\cdot)$ is concave along any
	non-negative direction, and any non-positive direction.
\end{proposition}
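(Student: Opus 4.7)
Fix $\x\in\X$ and a direction $\v\geq 0$, and let $g(t):=f(\x+t\v)$ on the valid range of $t$. The plan is to show that $g$ has non-increasing difference quotients, i.e., for any $0\leq t_1\leq t_2$ and any $s\geq 0$ such that all relevant points lie in $\X$,
\begin{equation*}
g(t_1+s)-g(t_1)\;\geq\; g(t_2+s)-g(t_2),
\end{equation*}
which, combined with continuity, implies concavity of $g$ on its domain.

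To derive the decreasing-difference inequality, set $\a:=\x+t_1\v$ and $\b:=\x+t_2\v$, so that $\a\leq\b$ because $\v\geq 0$. The DR property (Definition~\ref{def_dr}) handles a single-coordinate increment; I would extend it to the full direction $s\v$ by telescoping coordinatewise. Concretely, define $\a_0:=\a$, $\a_i:=\a_{i-1}+sv_i\,\chara_i$ for $i=1,\dots,n$, so $\a_n=\a+s\v$, and analogously $\b_0:=\b$, $\b_i:=\b_{i-1}+sv_i\,\chara_i$. Since $\b_{i-1}-\a_{i-1}=\b-\a\geq 0$, we may apply the DR property with increment $k=sv_i\geq 0$ in coordinate $i$ to obtain
\begin{equation*}
f(\a_i)-f(\a_{i-1})\;\geq\; f(\b_i)-f(\b_{i-1}), \qquad i=1,\dots,n.
\end{equation*}
Summing yields $f(\a+s\v)-f(\a)\geq f(\b+s\v)-f(\b)$, which is exactly the desired inequality. (As a sanity check: in the twice-differentiable case, this matches the observation from Proposition~\ref{lemma_dr} that $\nabla^2 f$ has all non-positive entries, so $\v^\top \nabla^2 f(\cdot)\,\v=\sum_{i,j}v_iv_j\,\partial_{ij}^2 f\leq 0$ when all $v_iv_j\geq 0$; equivalently, the antitone property of $\nabla f$ gives $g'(t_1)\geq g'(t_2)$ via taking inner product with $\v\geq 0$.)

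For the non-positive direction $\v\leq 0$, I would reduce to the previous case by reparametrization: write $\w:=-\v\geq 0$, pick $t_1\leq t_2$, and rearrange the desired inequality $g(t_1+s)-g(t_1)\geq g(t_2+s)-g(t_2)$ in terms of $\w$. Setting $\a':=\x+(t_2+s)\v$ and $\b':=\x+(t_1+s)\v$, one has $\a'\leq\b'$, and the claim becomes $f(\a'+s\w)-f(\a')\geq f(\b'+s\w)-f(\b')$, which follows by the same telescoping argument applied to the non-negative direction $\w$.

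The main obstacle is really only bookkeeping: ensuring that all the intermediate telescoping points $\a_i,\b_i$ remain inside $\X$, which holds because $\X$ is a product of intervals and the endpoints $\a,\b,\a+s\v,\b+s\v$ are assumed to lie in $\X$, so each $\a_i$ (resp.\ $\b_i$) is coordinatewise between $\a$ and $\a+s\v$ (resp.\ $\b$ and $\b+s\v$). Everything else is a direct consequence of the DR property and the standard fact that a continuous univariate function with non-increasing difference quotients is concave.
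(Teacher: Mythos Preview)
Your argument is correct, and in fact takes a more general route than the paper. The paper's proof assumes twice differentiability and simply computes
\[
g''(\xi)=\v^\top \nabla^2 f(\x+\xi\v)\,\v=\sum_{i\neq j} v_i v_j\,\partial^2_{ij}f+\sum_i v_i^2\,\partial^2_{ii}f\leq 0,
\]
invoking submodularity for the off-diagonal terms and coordinate-wise concavity for the diagonal ones (i.e., exactly the ``sanity check'' you wrote). That is the entire proof in the paper.

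By contrast, you work directly from the DR inequality (Definition~\ref{def_dr}) via a coordinate-wise telescoping, which establishes the non-increasing--increments property $f(\a+s\v)-f(\a)\geq f(\b+s\v)-f(\b)$ for $\a\leq\b$ and $\v\geq 0$ without any smoothness assumption; concavity of $g$ then follows from continuity. This is genuinely more elementary and more general: it covers the non-differentiable DR-submodular functions the paper allows in principle (cf.\ the revenue objective in \SEC\ref{sec:revenue}, which is discontinuous), whereas the paper's Hessian computation tacitly restricts to $C^2$ functions. The trade-off is brevity: the Hessian argument is a one-liner once $C^2$ is granted, while your telescoping requires the bookkeeping you already flagged. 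Both arguments handle the non-positive direction symmetrically.
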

Proposition \ref{prop_concave} implies that the univariate
\textit{auxiliary} function
$g_{\x,\v}(\xi):= f(\x+\xi \v), \xi\in \R_+, \v \in \R^E_+$ is concave. {As a result,
	the \algname{Frank-Wolfe} variant can follow a concave
	direction at each step, which is the main reason it uses $\v^k$ as the update direction (notice that $\v^k$ is a non-negative direction).

To derive the approximation guarantee, we need  assumptions on the
“non-linearity” of $f(\cdot)$ over the domain $\P$, which closely
corresponds to a Lipschitz assumption on the derivative of
$g_{\x,\v}(\cdot)$. For a $g_{\x,\v}(\cdot)$ with $L$-Lipschitz continuous
derivative in $[0, 1]$ ($L >0$), we have,
\begin{flalign}\label{eq_cur}
-\frac{L}{2} \xi^2 \leq g_{\x,\v}(\xi) - g_{\x,\v}(0) - \xi \nabla
g_{\x,\v}(0) = f(\x+\xi \v) - f(\x) - \dtp{\xi \v}{\nabla f(\x)}, \forall
\xi \in [0, 1].
\end{flalign}
To prove the approximation guarantee, we first derive the following lemma.
\begin{lemma}\label{lemma_31}
The output solution	$\x^K\in\P$. Assuming $\x^*$ to be the optimal solution, one has,
	\begin{flalign}\label{eq26}
	\dtp{\v^k}{\nabla f(\x^k)}\geq \alpha [f(\x^*) -f(\x^k)] -
	\frac{1}{2}\delta \cg, \ \  \forall k = 0,\cdots, K-1.
	\end{flalign}
\end{lemma}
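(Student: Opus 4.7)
\textbf{Plan of proof for Lemma \ref{lemma_31}.}

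The plan is to split the argument into two parts: (i) establishing feasibility $\x^k\in\P$ for every $k$, and (ii) deriving the key lower bound on $\dtp{\v^k}{\nabla f(\x^k)}$. For feasibility, note that at iteration $k$ we have $\x^k=\sum_{j=0}^{k-1}\gamma_j \v^j$ with $t_k:=\sum_{j=0}^{k-1}\gamma_j\le 1$. Writing $\x^k = t_k\cdot\bar\v_k$ where $\bar\v_k=\sum_{j=0}^{k-1}(\gamma_j/t_k)\v^j$ is a convex combination of points in $\P$, convexity gives $\bar\v_k\in\P$. Because $\P$ lies in the positive orthant and has lower bound $0$, the vectors $\v^j$ are non-negative, so $0\le \x^k=t_k\bar\v_k\le \bar\v_k$, and down-closedness of $\P$ implies $\x^k\in\P$. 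The same reasoning at $k=K$ (with $t_K=1$) yields $\x^K\in\P$.

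For the inequality, the strategy is to choose a convenient feasible witness direction $\v^\star:=(\x^*\vee\x^k)-\x^k$, to relate $\dtp{\v^\star}{\nabla f(\x^k)}$ to $f(\x^*)-f(\x^k)$ via monotonicity and the concavity-along-non-negative-direction property (Proposition \ref{prop_concave}), and finally to chain with the approximate linear-maximization oracle guaranteed by Step \ref{fw_sub} of \ALG \ref{alg_sfmax_GradientAscend}. Concretely, first I would verify $\v^\star\in\P$: componentwise, $0\le v^\star_i = \max\{x^*_i - x^k_i,\,0\}\le x^*_i$, so $0\le \v^\star\le \x^*\in\P$ and down-closedness gives $\v^\star\in\P$. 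Next, by monotonicity of $f$ and the fact that $\x^k+\v^\star=\x^*\vee\x^k\ge\x^*$,
\begin{equation*}
  f(\x^k+\v^\star) = f(\x^*\vee\x^k) \ge f(\x^*).
\end{equation*}
Since $\v^\star\ge 0$, the univariate function $g_{\x^k,\v^\star}(\xi)=f(\x^k+\xi\v^\star)$ is concave on $[0,1]$ by Proposition \ref{prop_concave}, whence $g_{\x^k,\v^\star}(1)-g_{\x^k,\v^\star}(0)\le g_{\x^k,\v^\star}'(0)=\dtp{\v^\star}{\nabla f(\x^k)}$, i.e.
\begin{equation*}
  \dtp{\v^\star}{\nabla f(\x^k)} \;\ge\; f(\x^k+\v^\star)-f(\x^k)\;\ge\; f(\x^*)-f(\x^k).
\end{equation*}

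To finish, I would combine this with the oracle guarantee in Step \ref{fw_sub}. Because $\v^\star\in\P$,
\begin{equation*}
  \max_{\v\in\P}\dtp{\v}{\nabla f(\x^k)}\;\ge\;\dtp{\v^\star}{\nabla f(\x^k)}\;\ge\; f(\x^*)-f(\x^k),
\end{equation*}
and the algorithm ensures $\dtp{\v^k}{\nabla f(\x^k)}\ge \alpha\max_{\v\in\P}\dtp{\v}{\nabla f(\x^k)}-\tfrac{1}{2}\delta L$. Since $\alpha>0$ and $f(\x^*)-f(\x^k)\ge 0$ (as $\x^k\in\P$ and $\x^*$ is optimal), multiplying by $\alpha$ preserves the direction of the inequality, yielding \eqref{eq26}.

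The main obstacle I anticipate is the feasibility of the witness $\v^\star$: one might be tempted to use $\x^*\vee\x^k$ directly, but that need not lie in $\P$ since it can exceed $\x^*$ coordinatewise. The key observation that unlocks the argument is that the \emph{increment} $(\x^*\vee\x^k)-\x^k$ is dominated by $\x^*$ and so down-closedness does deliver feasibility; everything else then follows from routine use of monotonicity and non-negative-direction concavity.
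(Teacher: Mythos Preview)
Your proposal is correct and mirrors the paper's proof essentially step for step: the same witness $\v^\star=(\x^*\vee\x^k)-\x^k$, feasibility via $0\le\v^\star\le\x^*$ and down-closedness, monotonicity to get $f(\x^k+\v^\star)\ge f(\x^*)$, concavity along non-negative directions (Proposition~\ref{prop_concave}) for the gradient bound, and the oracle inequality from Step~\ref{fw_sub}. Your feasibility argument for $\x^k\in\P$ is a bit more explicit than the paper's (which simply notes $\x^K$ is a convex combination of points of $\P$), and the remark that $f(\x^*)-f(\x^k)\ge 0$ is true but unnecessary---multiplying by $\alpha>0$ preserves the inequality regardless of sign.
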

\begin{theorem}[Approximation guarantee]\label{thm_fw}
	For error levels $\alpha \in (0, 1], \delta\in [0, \bar \delta]$, with $K$ iterations, \ALG
	\ref{alg_sfmax_GradientAscend} outputs $\x^K \in \P$ s.t.,
	\begin{equation}\label{eq8}
		f(\x^K)   \geq  (1-e^{-\alpha})f(\x^*)
		-\frac{\cg}{2} \sum\nolimits_{k=0}^{K-1}\gamma_k^2 -  \frac{\cg\delta}{2} + e^{-\alpha}f(0).	
	\end{equation}
\end{theorem}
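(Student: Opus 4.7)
The plan is to combine the Lipschitz non-linearity bound from equation~(\ref{eq_cur}) with Lemma~\ref{lemma_31}, producing a one-step recursion in the optimality gap $h_k := f(\x^*) - f(\x^k)$, and then unroll this recursion using the fact that $\sum_{k=0}^{K-1} \gamma_k = 1$ (which is guaranteed by the truncation \texttt{$\gamma_k \leftarrow \min\{\gamma_k, 1-t\}$} together with the termination condition $t < 1$). Membership $\x^K \in \P$ follows from Lemma~\ref{lemma_31}, so only the function-value inequality remains.

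First I would apply (\ref{eq_cur}) to the auxiliary function $g_{\x^k,\v^k}$ with step $\xi = \gamma_k \in (0,1]$ to obtain
\begin{equation*}
f(\x^{k+1}) = f(\x^k + \gamma_k \v^k) \;\geq\; f(\x^k) + \gamma_k \dtp{\v^k}{\nabla f(\x^k)} - \tfrac{\cg}{2}\gamma_k^2 .
\end{equation*}
Plugging in the subproblem bound from Lemma~\ref{lemma_31}, $\dtp{\v^k}{\nabla f(\x^k)} \geq \alpha[f(\x^*)-f(\x^k)] - \tfrac{1}{2}\delta \cg$, and subtracting $f(\x^*)$ from both sides yields the recursion
\begin{equation*}
h_{k+1} \;\leq\; (1-\alpha\gamma_k)\, h_k \;+\; \tfrac{\cg}{2}\gamma_k^2 \;+\; \tfrac{\cg \delta}{2}\gamma_k .
\end{equation*}

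Next I would unroll this inequality from $k=0$ to $K-1$. Induction gives
\begin{equation*}
h_K \;\leq\; h_0 \prod_{k=0}^{K-1}(1-\alpha\gamma_k) \;+\; \sum_{k=0}^{K-1}\Bigl[\tfrac{\cg}{2}\gamma_k^2 + \tfrac{\cg\delta}{2}\gamma_k\Bigr] \prod_{j=k+1}^{K-1}(1-\alpha\gamma_j).
\end{equation*}
Bounding each tail product by $1$ and using $\sum_k \gamma_k = 1$ collapses the additive-error term to $\tfrac{\cg\delta}{2}$, while the second-order term remains $\tfrac{\cg}{2}\sum_k \gamma_k^2$. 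For the leading factor, the standard inequality $1-x \leq e^{-x}$ together with $\sum_k \gamma_k = 1$ gives $\prod_{k=0}^{K-1}(1-\alpha\gamma_k) \leq e^{-\alpha \sum_k \gamma_k} = e^{-\alpha}$. Rearranging $h_K = f(\x^*) - f(\x^K)$ produces (\ref{eq8}).

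The main obstacle, in my view, is purely bookkeeping: ensuring that the termination clause $\gamma_k \leftarrow \min\{\gamma_k, 1-t\}$ really guarantees $\sum_{k=0}^{K-1}\gamma_k = 1$ exactly (so that the exponent $e^{-\alpha}$ is clean, not $e^{-\alpha(1-o(1))}$), and that the additive-error telescoping does not accumulate an extra factor depending on $K$. Both are handled by the uniform bound $\prod_{j=k+1}^{K-1}(1-\alpha\gamma_j) \leq 1$, but this is where a naive analysis could over- or under-count. Everything else is a direct application of (\ref{eq_cur}) and Lemma~\ref{lemma_31}, with no further use of DR-submodularity beyond what is already encapsulated in Proposition~\ref{prop_concave} (which justifies the Lipschitz-along-$\v^k$ bound on a non-negative direction).
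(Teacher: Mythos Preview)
Your proposal is correct and follows essentially the same route as the paper's proof: apply~(\ref{eq_cur}) at step $\gamma_k$, plug in Lemma~\ref{lemma_31}, derive the recursion $h_{k+1}\le(1-\alpha\gamma_k)h_k+\tfrac{L}{2}\gamma_k^2+\tfrac{L\delta}{2}\gamma_k$, unroll, bound the tail products by $1$, and use $1-x\le e^{-x}$ with $\sum_k\gamma_k=1$. The paper does exactly this (working with $f(\x^k)-f(\x^*)$ instead of your $h_k$), and is in fact less explicit than you are about the tail-product bound and the $\sum_k\gamma_k=1$ bookkeeping.
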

Theorem \ref{thm_fw} gives the approximation guarantee for
arbitrary chosen stepsize $\gamma_k$. 
By observing that $\sum_{k=0}^{K-1}\gamma_k =1$ and
$\sum_{k=0}^{K-1}\gamma_k^2 \geq K^{-1}$ (see the proof in \SUPP\ \ref{app_proof_c9}), with constant stepsize,  we obtain the following ``tightest" approximation  bound,
\begin{corollary}\label{cor_9}
For a fixed number of iterations $K$, 
  and  constant stepsize $\gamma_k  =\gamma = K^{-1}$, \ALG
  \ref{alg_sfmax_GradientAscend}  provides the following approximation guarantee: 
  $$f(\x^K)  \geq   (1-e^{-\alpha})f(\x^*)
  -\frac{\cg}{2K}- \frac{\cg\delta}{2} + e^{-\alpha}f(0).$$	
\end{corollary}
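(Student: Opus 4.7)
The plan is to derive Corollary \ref{cor_9} as a direct specialization of Theorem \ref{thm_fw}, optimizing over the choice of stepsizes subject to the implicit constraint imposed by the termination criterion of Algorithm \ref{alg_sfmax_GradientAscend}. First I would observe that the while-loop runs until the cumulative variable $t = \sum_k \gamma_k$ reaches $1$, and since in Step \ref{step_update} we also enforce $\gamma_k \leftarrow \min\{\gamma_k, 1-t\}$, the total sum satisfies $\sum_{k=0}^{K-1}\gamma_k = 1$ exactly.

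The key quantity appearing in the bound of Theorem \ref{thm_fw} is $\sum_{k=0}^{K-1}\gamma_k^2$, and the tightest approximation bound corresponds to minimizing this quantity subject to $\sum_{k=0}^{K-1}\gamma_k = 1$ for a fixed number of iterations $K$. I would apply the Cauchy--Schwarz inequality
\begin{equation}\notag
\Bigl(\sum_{k=0}^{K-1}\gamma_k\Bigr)^2 \;\leq\; K \sum_{k=0}^{K-1}\gamma_k^2,
\end{equation}
which yields $\sum_{k=0}^{K-1}\gamma_k^2 \geq 1/K$, with equality precisely when $\gamma_0 = \gamma_1 = \cdots = \gamma_{K-1} = 1/K$. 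This explains why the constant stepsize $\gamma = 1/K$ is the optimal choice, and simultaneously verifies that with this choice $\sum_k \gamma_k^2 = K\cdot(1/K)^2 = 1/K$.

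Substituting $\sum_{k=0}^{K-1}\gamma_k^2 = 1/K$ into the guarantee of Theorem \ref{thm_fw} yields
\begin{equation}\notag
f(\x^K) \;\geq\; (1-e^{-\alpha})f(\x^*) - \frac{L}{2K} - \frac{L\delta}{2} + e^{-\alpha}f(0),
\end{equation}
which is exactly the claim. I do not expect any serious obstacle here—Theorem \ref{thm_fw} does the heavy lifting, and the corollary is essentially a one-line optimization of the stepsize schedule. The only point that deserves a careful remark is that $K$ must be chosen so that $K\cdot\gamma = 1$ is consistent with the termination condition $t \geq 1$; with $\gamma = 1/K$ this is automatic and the algorithm terminates after exactly $K$ iterations.
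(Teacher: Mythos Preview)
Your proposal is correct and follows essentially the same approach as the paper: reduce to Theorem \ref{thm_fw}, observe $\sum_k \gamma_k = 1$, minimize $\sum_k \gamma_k^2$ under this constraint, and substitute. The only cosmetic difference is that the paper solves the constrained minimization via a Lagrangian argument while you use Cauchy--Schwarz; both yield the same optimum $\gamma_k = 1/K$ and $\sum_k \gamma_k^2 = 1/K$.
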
 
Corollary \ref{cor_9} implies that with a constant stepsize $\gamma$,  1) when $\gamma \rightarrow 0$
($K\rightarrow \infty$), \ALG \ref{alg_sfmax_GradientAscend} will
output the solution with the  worst-case guarantee
$(1-1/e)f(\x^*)$ in the error-free case if $f(0) = 0$; and 2) The \algname{Frank-Wolfe} variant has a sub-linear
convergence rate for monotone {DR}-submodular maximization over a down-closed convex constraint.

\noindent\textbf{Time complexity.}\quad  It can be seen that when using a constant stepsize, \ALG \ref{alg_sfmax_GradientAscend} needs $O(\frac{1}{\epsilon})$ iterations to get $\epsilon$-close to the worst-case guarantee $(1-e^{-1})f(\x^*)$ in the error-free case.
When $\P$ is a polytope
in the positive orthant,  one iteration of \ALG
\ref{alg_sfmax_GradientAscend} costs approximately the same as solving a positive LP, for which a nearly-linear time solver exists \citep{allen2015nearly}.


\vspace{-.2cm}
\section{Maximizing non-monotone submodular continuous functions}
\label{sec_nonmono_2greedy}

The problem of maximizing a general non-monotone submodular continuous function under box constraints\footnote{It is also called ``unconstrained" maximization in the combinatorial optimization community, since the domain $\X$ itself is also a box. Note that the box can be in the negative orthant here.}, i.e., $\max_{\x\in [\underline{\bu}, \bar \bu] \subseteq \X} f(\x)$,  has various real-world applications, including revenue maximization
with continuous assignments, multi-resolution summarization, etc, 
as discussed in Section \ref{sec_app}. 
The following proposition shows the NP-hardness of the problem.

\begin{proposition}\label{prop_np2}
The problem of maximizing a generally non-monotone submodular continuous function subject to 
 box constraints is NP-hard. 
Furthermore, there is no  $(1/2 +\epsilon)$-approximation $\forall \epsilon>0$, unless RP = NP.

\end{proposition}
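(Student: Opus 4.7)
My plan is to prove Proposition \ref{prop_np2} by reduction from the unconstrained maximization problem for non-monotone submodular \emph{set} functions, for which the matching $(1/2 + \epsilon)$-inapproximability (unless $\mathrm{RP}=\mathrm{NP}$) is the classical result of Feige, Mirrokni and Vondr\'ak. The ground set there is $V=\{1,\dots,n\}$, and the goal is to compute $\max_{S\subseteq V}F(S)$ for a submodular $F:2^V\to\R$. NP-hardness of this set problem implies NP-hardness of the continuous one as soon as we can embed the set instance as a continuous instance whose optimum has the same value.

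The embedding I would use is the \emph{multilinear extension}
\[
 f(\x)\;=\;\sum_{S\subseteq V} F(S)\prod_{i\in S}x_i\prod_{i\notin S}(1-x_i),\qquad \x\in[0,1]^n,
\]
and then take the box constraint $[\underline{\bu},\bar\bu]=[0,1]^n\subseteq \X$. I would then carry out three verifications in sequence. First, $f$ is a continuous submodular function: it is twice differentiable and a direct computation gives, for $i\neq j$,
\[
 \frac{\partial^2 f(\x)}{\partial x_i\,\partial x_j}
 \;=\;\E_{R\sim\x_{-ij}}\!\bigl[F(R\cup\{i,j\})-F(R\cup\{i\})-F(R\cup\{j\})+F(R)\bigr]\;\le\;0,
\]
where $R\sim\x_{-ij}$ samples each $\ell\notin\{i,j\}$ independently with probability $x_\ell$; the inequality is just submodularity of $F$. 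By the characterization (\ref{eq2}), $f$ is submodular continuous. Second, $f$ is non-monotone whenever $F$ is (inherited from $F$ on the vertices $\{0,1\}^n$). Third, the fractional and discrete optima agree:
\[
 \max_{\x\in[0,1]^n} f(\x)\;=\;\max_{S\subseteq V} F(S).
\]
This holds because $f$ is multilinear, hence affine in each coordinate separately; any fractional $\x$ can be rounded coordinate-by-coordinate to a vertex of $[0,1]^n$ without decreasing the value, so a maximizer is attained at some $\mathbbm{1}_S$ with $f(\mathbbm{1}_S)=F(S)$.

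Putting the pieces together: any polynomial-time $(1/2+\epsilon)$-approximation $\mathcal{A}$ for $\max_{\x\in[\underline{\bu},\bar\bu]} f(\x)$ would, applied to $f$ above, return $\tilde\x\in[0,1]^n$ with $f(\tilde\x)\ge (1/2+\epsilon)\max_{S} F(S)$; rounding $\tilde\x$ to a vertex $\mathbbm{1}_{\tilde S}$ as in the previous step produces a set with $F(\tilde S)\ge f(\tilde\x)$, contradicting the Feige--Mirrokni--Vondr\'ak hardness unless $\mathrm{RP}=\mathrm{NP}$. In particular this also yields NP-hardness. The main obstacle is making the reduction genuinely polynomial: since $f$ is written as an exponential sum, I would either (i) restrict to the concrete hard instances of Feige--Mirrokni--Vondr\'ak (e.g.\ symmetric cut/bisection-type functions) whose multilinear extensions admit closed-form polynomial-size descriptions and polynomial-time gradient oracles, or (ii) argue in the value-oracle framework where the standard sampling estimator of $f$ and $\nabla f$ is polynomial-time with high probability, so that the inapproximability is preserved with the same $\mathrm{RP}=\mathrm{NP}$ conclusion.
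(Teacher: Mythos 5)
Your proposal is correct and follows essentially the same route as the paper: a reduction from unconstrained non-monotone submodular set function maximization via the multilinear extension over the box $[0,1]^n$, invoking the Feige--Mirrokni--Vondr\'ak $(1/2+\epsilon)$-inapproximability. The only cosmetic difference is that you round the fractional solution deterministically coordinate-by-coordinate (using multilinearity), whereas the paper returns the random set $R(\hat\x)$ whose expected value equals $f(\hat\x)$; you are in fact more careful than the paper about verifying continuous submodularity of the extension and about the polynomial-time representation of $f$.
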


We now describe our algorithm for maximizing a 
non-monotone  submodular continuous function subject to box constraints. 
It provides a 1/3-approximation, is  inspired by  the double greedy algorithm
of \cite{buchbinder2012tight} and \cite{gottschalk2015submodular}, and can be  viewed as a procedure performing coordinate-ascent on \textit{two} solutions.

We view the process as two particles starting from $\x^0=\underline{\bu}$ and
$\y^0 = \bar{\bu}$, and following a certain ``flow"  toward each
other. 
The pseudo-code is given in \ALG \ref{alg_uscfmax_DoubleGreedy}.
\begin{algorithm}[!t]
  \caption{\algname{DoubleGreedy} algorithm  for maximizing non-monotone submodular continuous functions}\label{alg_uscfmax_DoubleGreedy}
  \KwIn{$\max_{\x\in [\underline{\bu}, \bar \bu]} f(\x)$,
	$f$ is  generally non-monotone,  
    $f(\underline{\bu}) + f(\bar \bu)\geq 0$} {$\x^0 \leftarrow \underline{\bu}$,
    $\y^0 \leftarrow \bar \bu$\;} 
    \For{$k = 1 \rightarrow n$}{ 
    {find
      $\hat u_a$ \text{ s.t. }
      $f(\sete{x^{k-1}}{e_k}{\hat u_a}) \geq \max_{u_a\in[\underline{u}_{e_k}, \bar
        u_{e_k}]} f(\sete{x^{k-1}}{e_k}{ u_a}) - \delta$,
      $\delta_a \leftarrow f (\sete{x^{k-1}}{e_k}{\hat u_a}) -
      f(\x^{k-1}$)\tcp*{$\delta\in
        [0, \bar \delta]$
        is the additive error level }\label{1d_1}} 
{find $\hat u_b$ \text{ s.t. }
      $f(\sete{y^{k-1}}{e_k}{\hat u_b})\geq \max_{u_b\in [\underline{u}_{e_k}, \bar u_{e_k}]} f(\sete{y^{k-1}}{e_k}{u_b}) - \delta$,
      $\delta_b \leftarrow f (\sete{y^{k-1}}{e_k}{\hat u_b}) -
      f(\y^{k-1})$\;\label{1d_2}}
    {\textbf{If} $\delta_a\geq \delta_b$:
      $\x^{k}\leftarrow (\sete{x^{k-1}}{e_k}{\hat u_a})$,
      $\y^{k}\leftarrow (\sete{y^{k-1}}{e_k}{\hat u_a})$ \;}
    {\textbf{Else}: \quad\quad
      $\y^{k}\leftarrow (\sete{y^{k-1}}{e_k}{\hat u_b})$,
      $\x^{k}\leftarrow(\sete{x^{k-1}}{e_k}{\hat u_b})$\;} } {Return
    $\x^n$ (or $\y^n$)\tcp*{note that $\x^n = \y^n$}}
\end{algorithm}
We proceed in $n$ rounds that correspond to some arbitrary order of
the coordinates.
At iteration $k$, we consider solving a one-dimensional (1-D) subproblem
over coordinate $e_k$ for each particle, and moving the particles
based on the calculated local gains toward each other.  Formally, for
a given coordinate $e_k$, we solve a 1-D subproblem to
find the value of the first solution $\x$ along
coordinate $e_k$ that maximizes $f$, i.e.,
$\hat u_a = \arg\max_{u_a} f(\sete{x^{k-1}}{e_k}{u_a}) - f(\x^{k-1})$,
and calculate its marginal gain $\delta_a$.  We then solve
another 1-D subproblem to find the value of the second 
solution  $\y$ along coordinate $e_k$ that maximizes $f$, i.e.,
$\hat u_b = \arg\max_{u_b} f(\sete{\y^{k-1}}{e_k}{u_b}) -
f(\y^{k-1})$,
and calculate the second marginal gain $\delta_b$.  We decide by
comparing the two marginal gains. If changing $x_{e_k}$ to be
$\hat u_a$  has a larger local benefit, we change  \textit{both} $x_{e_k}$ and $y_{e_k}$ to be
$\hat u_a$. Otherwise, we change \textit{both} of them to be
$\hat u_b$.
After $n$ iterations
the particles should meet at point $\x^n = \y^n$, which is the final solution. Note that \ALG \ref{alg_uscfmax_DoubleGreedy} can tolerate  additive error $\delta$ in solving each 1-D subproblem (Steps \ref{1d_1}, \ref{1d_2}).

We would like to emphasize that the  assumptions required by
\algname{DoubleGreedy}  are submodularity of $f$, $f(\underline{\bu}) + f(\bar \bu)\geq 0$ and the (approximate) solvability
of the 1-D subproblem.
For proving the approximation guarantee, the idea is to bound the
loss in the objective value from the assumed optimal objective value between every two consecutive steps,
which is then used to bound the
maximum  loss after $n$ iterations.
\begin{theorem}\label{thm_double}
  Assuming the optimal solution to be $\opt$, the output of \ALG
  \ref{alg_uscfmax_DoubleGreedy} has function value no less than
  $ \frac{1}{3} f(\opt) - \frac{4n}{3}\delta$, where
  $\delta\in [0, \bar \delta]$ is the additive error level for each
  1-D subproblem.
\end{theorem}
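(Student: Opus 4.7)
The plan is to adapt the standard hybrid/interpolation analysis of the deterministic double greedy algorithm for unconstrained submodular set-function maximization to the continuous, box-constrained setting with an approximate $1$-D oracle. First, I would introduce a sequence of reference points $\{OPT^k\}_{k=0}^n$ bridging the optimum and the final solution: set $OPT^k_{e_j} := x^k_{e_j}$ for $j\le k$ and $OPT^k_{e_j} := \opti{e_j}$ for $j>k$. This is well defined because $\x^k$ and $\y^k$ agree on the first $k$ coordinates after iteration $k$. One checks $OPT^k \in [\underline{\bu},\bar{\bu}]$, $OPT^0 = \opt$, and $OPT^n = \x^n = \y^n$, so that the total loss $f(\opt) - f(\x^n)$ telescopes as $\sum_{k=1}^n [f(OPT^{k-1}) - f(OPT^k)]$.

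The core step is a per-iteration loss lemma of the form
\begin{equation*}
f(OPT^{k-1}) - f(OPT^k) \;\le\; [f(\x^k) - f(\x^{k-1})] + [f(\y^k) - f(\y^{k-1})] + 4\delta, \qquad k=1,\dots,n.
\end{equation*}
I would split into the two algorithmic cases ($\delta_a\ge\delta_b$ vs.\ $\delta_a<\delta_b$) and, in each case, bound the hybrid loss by applying the submodularity inequality~(\ref{eq1}) (equivalently the \texttt{weak DR} characterization of Definition~\ref{def_weakdr}) to judiciously chosen pairs of points. For instance, when $\delta_a\ge\delta_b$ both $\x^{k-1}_{e_k}$ and $\y^{k-1}_{e_k}$ are reset to $\hat u_a$; submodularity then implies that the \emph{upward} marginal at the lower point $\x^{k-1}$ dominates the corresponding marginal at the higher point $OPT^{k-1}$, while the loss incurred by switching $\y^{k-1}_{e_k}$ to $\hat u_a$ is controlled, via another submodularity application, by $-\delta_b$ plus $O(\delta)$ coming from the $\delta$-approximate optimality of $\hat u_b$ in Step~\ref{1d_2}. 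The $4\delta$ slack aggregates two oracle errors used twice across these hybrid inequalities.

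Summing the lemma across $k=1,\dots,n$ telescopes to
\begin{equation*}
f(\opt) - f(\x^n) \;\le\; [f(\x^n)-f(\underline{\bu})] + [f(\y^n)-f(\bar{\bu})] + 4n\delta \;=\; 2f(\x^n) - [f(\underline{\bu}) + f(\bar{\bu})] + 4n\delta,
\end{equation*}
using $\x^0=\underline{\bu}$, $\y^0=\bar{\bu}$, and $\x^n=\y^n$. Invoking the hypothesis $f(\underline{\bu}) + f(\bar{\bu}) \ge 0$ yields $f(\opt) \le 3f(\x^n) + 4n\delta$, which rearranges to the claimed $f(\x^n) \ge \tfrac{1}{3}f(\opt) - \tfrac{4n}{3}\delta$.

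The principal obstacle is the per-iteration lemma. Unlike the Boolean case, where each coordinate has only two possible values and the argument reduces to one or two scalar inequalities, here $OPT^{k-1}_{e_k}$, $\x^{k-1}_{e_k}$, $\y^{k-1}_{e_k}$, $\hat u_a$, and $\hat u_b$ are arbitrary reals in $[\underline{u}_{e_k},\bar u_{e_k}]$ with no prescribed ordering. One must therefore chain several applications of submodularity with carefully chosen join/meet pairs so that the ``bypass'' through $\x^{k-1}$ (respectively $\y^{k-1}$) genuinely bounds the drop $f(OPT^{k-1}) - f(OPT^k)$, while simultaneously threading the $\delta$-slack from the two approximate $1$-D maximizations in Steps~\ref{1d_1}--\ref{1d_2} so that it aggregates to only $4\delta$ per iteration.
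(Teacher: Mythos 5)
Your proposal is correct in outline and takes essentially the same route as the paper: your coordinate-wise hybrid $OPT^k$ coincides with the paper's $OPT^i:=(\opt\vee\x^i)\wedge\y^i$, your per-iteration loss inequality is exactly the paper's key lemma, and the telescoping argument together with the use of $f(\underline{\bu})+f(\bar \bu)\ge 0$ matches the paper's final step. Be aware, though, that the per-iteration lemma you defer is where all the work lies: the paper proves it by splitting each round into two sub-updates, first establishing $\delta_a+\delta_b\ge -2\delta$ (so each intermediate solution loses at most $\delta$ per sub-step), and then running a four-way case analysis on whether $x^i_{e_k}$ exceeds $\opti{e_k}$ and which of $\delta_a,\delta_b$ is larger, with the harder cases resolved by contradiction against the $\delta$-approximate optimality of the 1-D oracle.
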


\noindent\textbf{Time complexity.}\quad 
It can be seen that the time complexity of  \ALG
\ref{alg_uscfmax_DoubleGreedy} is $O(n*\texttt{cost\_1D})$, where $\texttt{cost\_1D}$ is the cost of solving the 1-D subproblem.  Solving a 1-D subproblem is usually very cheap. For non-convex/non-concave quadratic programming it has a closed form solution.

\section{Examples  of submodular continuous objective
  functions}\label{sec_app}

 In this part, we discuss several concrete problem instances  with their corresponding submodular continuous objective functions.

\noindent\paragraph{Extensions of submodular set functions.}  
The multilinear extension \citep{calinescu2007maximizing} and softmax extension \citep{gillenwater2012near}
are special cases of DR-submodular functions, that  are extensively used
for submodular set function maximization. The Lov{\'a}sz extension \citep{lovasz1983submodular} used for submodular set function minimization 
is both submodular and convex  (see Appendix A in  \cite{bach2015submodular}).

\noindent\paragraph{Non-convex/non-concave quadratic programming (NQP).} NQP problem  of
the form $f(\x) = \frac{1}{2} \x^\trans \bmH \x + \h^\trans \x + c$ under linear constraints
 naturally
arises in many applications, including scheduling \citep{DBLP:journals/jacm/Skutella01}, inventory theory,
and free boundary problems.  A
special class of NQP is the submodular NQP (the minimization of which was studied in \cite{kim2003exact}), in which all
off-diagonal entries of $\bmH$ are required to be non-positive.  In this work, we
mainly use submodular NQPs as synthetic functions for both monotone
DR-submodular maximization and non-monotone submodular maximization.

\noindent\paragraph{Optimal budget allocation with continuous assignments.} 
Optimal budget allocation is  a  special case of the influence maximization
problem.
It  can be modeled as a bipartite graph
$(S,T; W)$, where $S$ and $T$ are collections of advertising channels
and customers, respectively.
The edge weight, $p_{st}\in W$,  represents the influence probability of channel $s$ to customer $t$.
The goal is to distribute the budget (e.g., time for a TV
advertisement, or space of an inline ad) among the source nodes, and
to maximize the expected influence on the potential customers
\citep{soma2014optimal,DBLP:conf/aaai/HatanoFMK15}.  The total influence of customer $t$ from all
channels can be modeled by a proper monotone DR-submodular function
$I_t(\x)$, e.g., $I_t(\x) = 1- \prod_{(s, t)\in W} \left(1-p_{st} \right)^{x_s}$ where $\x\in \R^S_+$ is the budget assignment among the advertising channels.
For a set of $k$ advertisers, let $\x^i\in \R^S_+$ to be the budget assignment for
advertiser $i$, and $\x:= [\x^1,\cdots, \x^k]$ denote the assignments for all the advertisers.    The overall objective is,
\begin{flalign}
 g(\x)= \sum\nolimits_{i=1}^k \alpha_i f(\x^i) ~\text{ with }~ f(\x^i)
 :=\sum\nolimits_{t\in T} I_t(\x^i), \; 0\leq \x^i\leq \bar \bu^i , \forall i = 1,\cdots, k
\end{flalign}
which is monotone DR-submodular.  A concrete application is for  search
marketing advertiser bidding, in which vendors bid for the
right to appear alongside the results of different search
keywords. 
Here, $x^i_s$ is the volume of advertising space allocated to the
advertiser $i$ to show his ad alongside query keyword $s$.  The search
engine company needs to distribute the budget (advertising  space) to all vendors
to maximize their influence on the customers, 
while respecting various constraints. For example, each vendor has a
specified budget limit for advertising, and the ad space associated
with each search keyword can not be too large. 
All such constraints can be
formulated as a down-closed polytope $\P$, hence the \algname{Frank-Wolfe}
variant  can be used to find an approximate solution for the problem $\max_{\x\in \P} g(\x)$.  
Note that one can flexibly add regularizers in designing
$I_t(\x^i)$ as long as it remains monotone DR-submodular. 
For example, adding separable
regularizers of the form $\sum_s \phi(x^i_s)$ does not change the
off-diagonal entries of the Hessian, and hence maintains
submodularity. Alternatively, bounding the second-order
derivative of $\phi(x^i_s)$ ensures DR-submodularity.

\noindent\paragraph{Revenue maximization with continuous assignments.}\label{sec:revenue}
In viral marketing, sellers choose a small subset of buyers to give
them some product for free, to trigger a cascade of further adoptions
through ``word-of-mouth" effects, in order to maximize the total
revenue \citep{hartline2008optimal}.  For some products
(e.g., software), the seller usually gives away the product in the form of
a trial, to be used for free for a limited time period.
In this task, except for deciding whether to choose a user or not, the
sellers also need to decide how much the free assignment
should be, in which the assignments should be modeled as 
continuous variables. 
We call this problem \textit{revenue maximization with continuous
  assignments}.
Assume there are  $q$ products and $n$ buyers/users, 
let $\x^i \in \R_+^n$ to be the assignments of product $i$ to the $n$ users,
 let $\x:= [\x^1,\cdots, \x^q]$ denote the  assignments for the $q$ products. 
The revenue can be modelled as
$g(\x) = \sum_{i=1}^q f(\x^i)$ with
\begin{flalign}\label{eq_re}
  f(\x^i) := \alpha_i \!\!\! \sum\nolimits_{s: x^i_s =0} R_s(\x^i) + \beta_i
  \!\!\!  \sum\nolimits_{t: x^i_t \neq 0} \!\! \phi (x^i_t) \! +\gamma_i \!\!
  \sum\nolimits_{t: x^i_t \neq 0} \!\!\! \bar R_t(\x^i), \quad 
  \; 0\leq \x^i\leq \bar \bu^i, 
\end{flalign}
where $x^i_t$ is the assignment of product $i$ to user $t$ for free, e.g., the amount of free trial time or the amount 
of the product itself. 
$R_s(\x^i)$ models revenue gain from user $s$ who did not receive
the free assignment. It can be some non-negative, non-decreasing
submodular function. $\phi (x^i_t)$ models revenue gain from user $t$
who received the free assignment, since the more one user tries the
product, the more likely he/she will buy it after the trial period.
$\bar R_t(\x^i)$ models the revenue loss from user $t$ (in the free
trial time period the seller cannot get profits), which  can be some
non-positive, non-increasing submodular function. 
With
$\beta \!\!= \!\!\gamma \!\!= \!\!0$, we  recover the classical model of
\cite{hartline2008optimal}.
For products with continuous assignments, usually the cost of the
product does not increase with its amount, e.g., the product as a
software, so we only have the box constraint on each assignment.  The
objective in \EQ \ref{eq_re} is generally
\textit{non-concave/non-convex}, and non-monotone submodular (see \SUPP\ \ref{supp_revenue} for more
details), thus
can be approximately maximized by the proposed \algname{DoubleGreedy} algorithm.
\begin{lemma}\label{revenue}
  If $R_s(\x^i)$ is non-decreasing submodular and $\bar R_t(\x^i)$ is
  non-increasing submodular, then $f(\x^i)$ in \EQ \ref{eq_re} is
  submodular.
\end{lemma}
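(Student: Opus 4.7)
My plan is to decompose $f(\x^i)$ into its three constituent pieces and verify submodularity of each one separately; since a sum of submodular functions is submodular, this suffices. Drop the superscript $i$ and write $f = \alpha F_1 + \beta F_2 + \gamma F_3$, where
\[
F_1(\x) := \sum_{s:\, x_s = 0} R_s(\x), \qquad
F_2(\x) := \sum_{t:\, x_t \neq 0} \phi(x_t), \qquad
F_3(\x) := \sum_{t:\, x_t \neq 0} \bar R_t(\x).
\]
The coefficients $\alpha,\beta,\gamma$ are non-negative (as scalars controlling the contributions) so positive scaling preserves submodularity.

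The middle term $F_2$ is the simplest: rewrite $F_2(\x) = \sum_t \tilde\phi(x_t)$ where $\tilde\phi(0):=0$ and $\tilde\phi(x):=\phi(x)$ for $x\neq 0$. This is a coordinate-wise separable function, hence modular (both sub- and supermodular), so $F_2(\x)+F_2(\y) = F_2(\x\vee\y)+F_2(\x\wedge\y)$ holds with equality.

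For $F_1$ and $F_3$ I would fix an arbitrary pair $\x,\y \in [0,\bar\bu]$ and verify the inequality $F_j(\x)+F_j(\y) \geq F_j(\x\vee\y)+F_j(\x\wedge\y)$ coordinate-by-coordinate, splitting into four cases according to whether $x_\ell$ and $y_\ell$ are zero or positive (using that the domain is in the non-negative orthant, $(\x\vee\y)_\ell=0$ iff both are zero, and $(\x\wedge\y)_\ell=0$ iff at least one is zero). For $F_1$: in the case both $x_s,y_s=0$ all four indicators activate and the claim reduces to submodularity of $R_s$; in the case $x_s=0 < y_s$ only $\x$ and $\x\wedge\y$ contribute, leaving $R_s(\x)\geq R_s(\x\wedge\y)$, which follows because $R_s$ is \emph{non-decreasing} and $\x\geq\x\wedge\y$; the symmetric case is identical; and the case $x_s,y_s>0$ contributes zero on both sides. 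For $F_3$: again the both-nonzero case gives submodularity of $\bar R_t$ directly; the one-zero case (say $x_t=0<y_t$) reduces to $\bar R_t(\y)\geq \bar R_t(\x\vee\y)$, which follows because $\bar R_t$ is \emph{non-increasing} and $\x\vee\y\geq \y$; and the both-zero case is trivial.

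The one subtle step is making sure the non-separable indicator structure in $F_1$ and $F_3$ does not break submodularity at precisely those pairs where a coordinate crosses zero; the monotonicity hypotheses on $R_s$ and $\bar R_t$ are used exactly to control these boundary cases, and I expect this to be the main (but still routine) part to write carefully. Summing $\alpha F_1 + \beta F_2 + \gamma F_3$ then yields submodularity of $f(\x^i)$.
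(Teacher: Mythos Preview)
Your proposal is correct and follows essentially the same route as the paper. The paper also decomposes $f$ into the three terms, notes that the middle term is separable (hence modular), and for the two indicator-weighted sums carries out exactly the same case analysis you describe---organized there by partitioning indices according to the supports $A=\spt{\a}$ and $B=\spt{\b}$, which is just your four coordinate-wise cases packaged as $E\setminus(A\cup B)$, $A\setminus B$, $B\setminus A$, $A\cap B$. In each case the monotonicity of $R_s$ (resp.\ $\bar R_t$) handles the boundary cross-terms and submodularity handles the all-active case, precisely as you outline.
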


\noindent\paragraph{Sensor energy management.}
For cost-sensitive outbreak detection in sensor networks \citep{leskovec2007cost}, one
needs to place sensors in a subset of locations selected from all the possible locations $E$, to
quickly detect a set of contamination events $V$, while respecting
the cost constraints of the sensors. 
For each location $e\in E$ and each event $v\in V$,
a value  $t (e, v)$ is provided as the time it takes for the placed sensor in $e$
to detect event $v$. \cite{DBLP:conf/nips/SomaY15} considered the sensors
with discrete energy levels. It is also natural to model the energy levels
of sensors to be a  \emph{continuous} variable $\x\in \R_+^E$. For a sensor 
with energy level $x_e$, the success probability it  detects the event
is $1-(1-p)^{x_e}$,  which models that  by spending one unit of energy 
one has an extra chance of detecting the event with probability $p$. 
In this model, beyond  deciding whether to place a sensor or not, one also
needs to decide the optimal energy levels. Let $t_{\infty} = \max_{e\in E, v\in V}t(e,v)$, let $e_v$ be the first sensor that detects  event $v$ ($e_v$ is a random variable).
One can
define the objective as the expected  detection time that could be \textit{saved},
\begin{flalign}
  f(\x) := \mathbb{E}_{v\in V} \mathbb{E}_{e_v} [t_{\infty} - t(e_v,
  v)],
\end{flalign}
which is a monotone DR-submodular function. 
Maximizing $f(\x)$
w.r.t.  the cost constraints  pursues the goal of finding the
optimal energy levels of the sensors,  to maximize the expected 
 detection time that could be saved.

\noindent\paragraph{Multi-resolution summarization.}
Suppose we have a collection of items, e.g., images
$E = \{ e_1, \cdots, e_n\}$. Our goal is to extract a representative
summary, where representativeness is  defined w.r.t.~a submodular set
function $F:2^E\to \mathbb{R}$. However, instead of returning a single set, our goal
is to obtain summaries at multiple levels of detail or resolution. One
way to achieve this goal is to assign each item $e_i$ a nonnegative score
$x_i$. Given a user-tunable threshold $\tau$, the resulting summary
$S_\tau=\{e_i| x_i \geq \tau\}$ is the set of items with scores
exceeding $\tau$. Thus, instead of solving the discrete problem of
selecting a fixed set $S$, we pursue the goal to optimize over the
scores, e.g., 
to use the following  submodular continuous function,
\begin{flalign} 
  f(\x) =  \sum\nolimits_{i \in E} \sum\nolimits_{j\in E}  \phi(x_j) s_{i,j}
   - \sum\nolimits_{i \in E} \sum\nolimits_{j \in E} x_i x_j s_{i,j},
\end{flalign} 
where $s_{i,j}\geq 0$ is the similarity between items $i,j$, and $\phi(\cdot)$ is a non-decreasing concave function.

\noindent\paragraph{Facility location.}
The classical discrete facility location problem can be naturally generalized to the continuous case where
the scale of a facility is determined
by a continuous value in interval $[0, \bar \bu]$. 
For a set of facilities $E$, let 
$\x\in \R_+^E$ be the scale of all facilities. 
The goal is to decide how large each facility
should be in order to optimally serve 
a set $T$ of  customers. For a facility $s$ of 
scale $x_s$, let $p_{st}(x_s)$ be the value of service
it can provide to customer $t\in T$, where $p_{st}(x_s)$
is a normalized monotone function ($p_{st}(0) = 0$). 
Assuming each customer chooses the facility
with highest value, the total
service provided to all customers is 
$f(\x) = \sum_{t\in T} \max_{s\in E} p_{st}(x_s)$.
It can be shown that $f$ is monotone submodular.

\noindent\paragraph{Other applications.}  Many  discrete submodular problems can be naturally
generalized to the continuous setting with submodular
continuous objectives.  The  maximum
coverage problem  and the problem of text summarization with submodular objectives are among the examples \citep{lin2010multi}. We defer further details to \SUPP\ \ref{supp_more_app}.

\section{Experimental results}
\label{sec_exp}

We compare the performance of our proposed algorithms, the
\algname{Frank-Wolfe} variant and   \algname{DoubleGreedy}, with the following
baselines: a) \algname{Random}: 
uniformly sample $k_s$ solutions from
the constraint set using the hit-and-run
sampler \citep{kroese2013handbook},
and select the best one.
For the constraint set as a very
high-dimensional polytope, this approach is
computationally very expensive. To accelerate sampling from a high-dimensional
polytope, we also use
 b) \algname{RandomCube}:
randomly sample $k_s$ solutions from the {hypercube}, and decrease
their elements until they are inside the polytope. In addition we consider c)
\algname{ProjGrad}: projected gradient ascent with an empirically tuned
step size; and 
d) \algname{SingleGreedy}: for non-monotone submodular functions maximization over a box constraint, we
greedily increase each coordinate, as long as it remains feasible. This
approach is similar to the coordinate ascent method.
In all of the experiments, we use random order of coordinates
for  \algname{DoubleGreedy}. We use constant step size for the \algname{Frank-Wolfe} variant since it gives the tightest approximation guarantee (see Corollary \ref{cor_9}). 
The performance of the methods are evaluated for the following tasks.

\subsection{Results for monotone maximization}

\setkeys{Gin}{width=0.5\textwidth}
\begin{figure}[t]
    \subfloat[\algname{Frank-Wolfe} utility, $K=50$ \label{fig_nqp_iter}]{%
    \hspace{-.3cm}
      \includegraphics[]{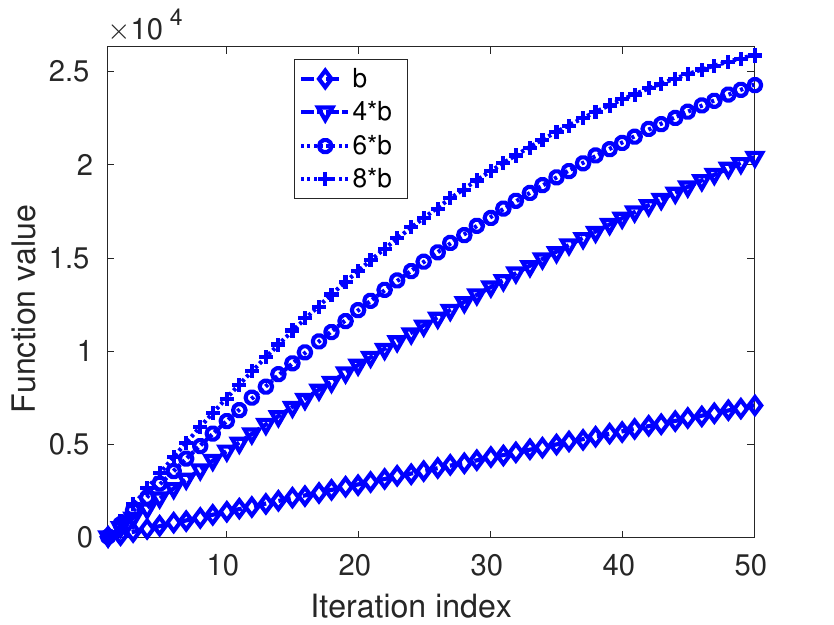}
     }
    \subfloat[Monotone NQP \label{fig_nqp}]{%
    \hspace{-.3cm}
      \includegraphics[]{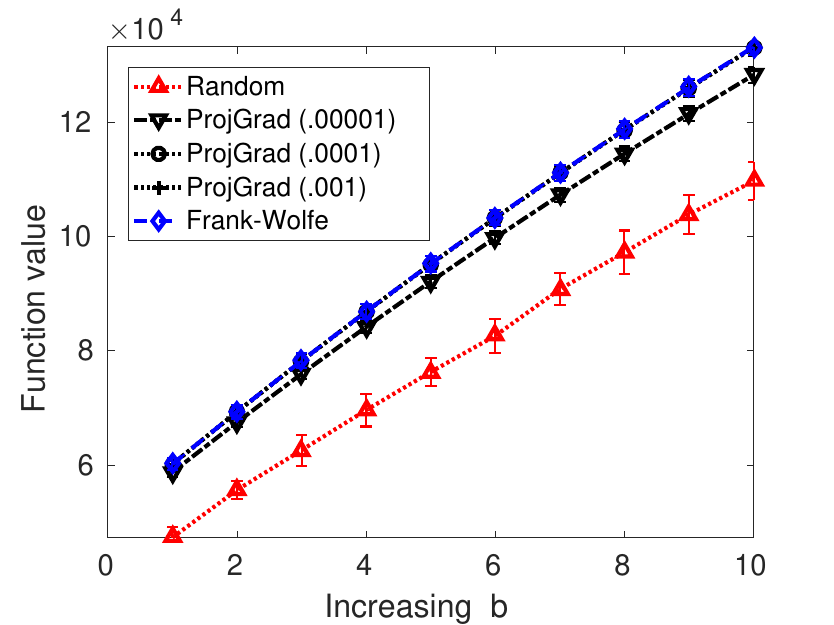}
     }\\
     \subfloat[ Budget allocation \label{fig_inf}]{%
      \hspace{-.3cm}
      \includegraphics[]{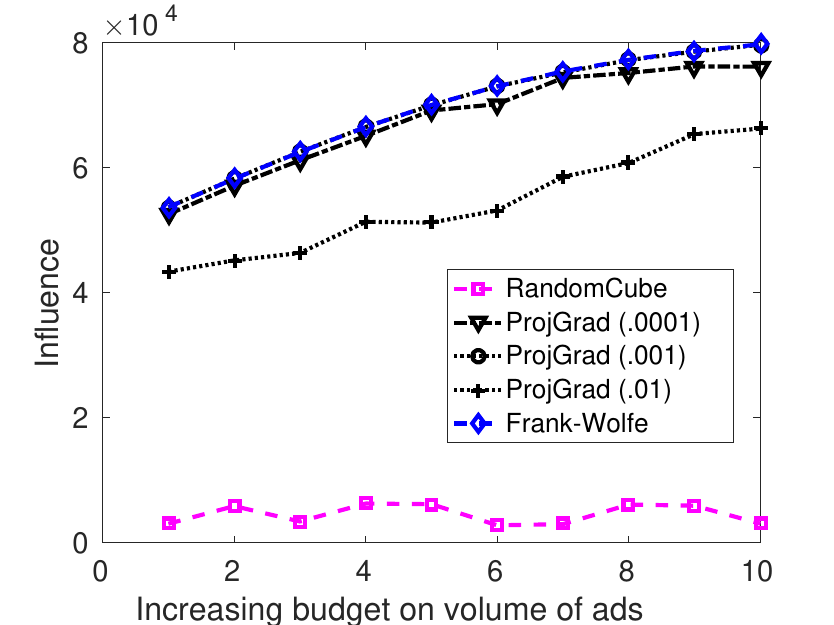}
    }
        \subfloat[Budget allocation  \label{fig_inf2}]{%
         \hspace{-.3cm}
          \includegraphics[]{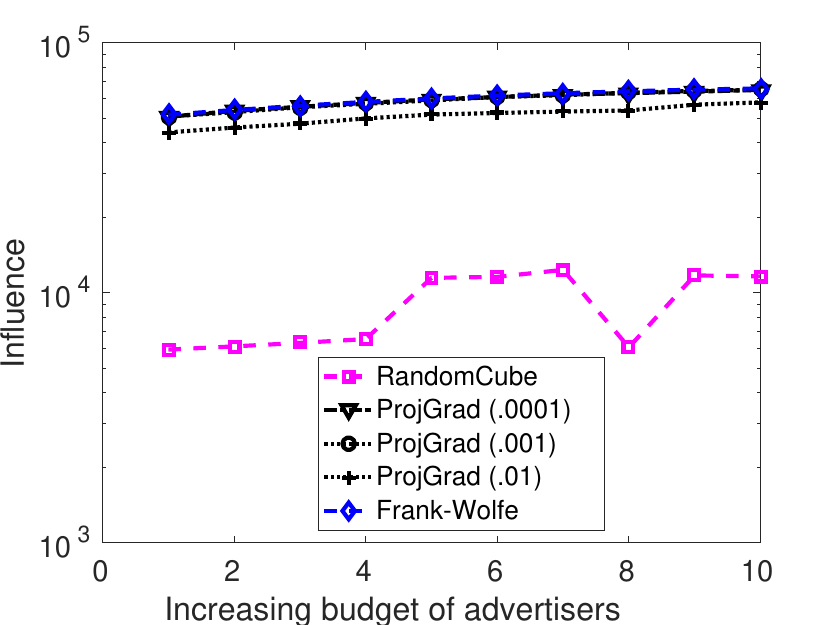}
         }
    \caption{Monotone experiments (both \algname{Frank-Wolfe} and  \algname{ProjGrad} were ran for 50 iterations): a) \algname{Frank-Wolfe} function value  for 4 instances with different $\b$; b) NQP function value returned  w.r.t. different $\b$; c) Influence returned w.r.t.~different budgets on volume of ads;  d) Influence returned w.r.t.~different budgets of advertisers;}
\end{figure}


\textbf{Monotone DR-submodular NQP.}  We randomly generated monotone
DR-submodular NQP functions of the form
$f(\x) = \frac{1}{2} \x^{\trans} \bmH \x + \h^{\trans} \x $, where
$\bmH\in \R^{n \times n}$ is a random matrix with \textit{uniformly} distributed  non-positive
entries in $[-100,0]$,  $n=100$. We further
generated a set of $m = 50$ linear constraints 
to construct the positive polytope
$\P = \{\x\in \R^n | \bmA\x\leq \b, 0\leq \x\leq \bar \bu\}$, where $\bmA$ has uniformly distributed entries in $[0,1]$, $\b= \mathbf{1}, \bar \bu = \mathbf{1}$.
To make the gradient 
non-negative, we set $\h = -\bmH^{\trans} \bar \bu$.
We empirically tuned step size $\alpha_p$ for \algname{ProjGrad} and
ran all algorithms for $50$ iterations. 
 \FIG \ref{fig_nqp_iter} shows the utility
obtained by the \algname{Frank-Wolfe} variant v.s. the 
iteration index for 4 function instances with different values of $\b$.
\FIG \ref{fig_nqp} shows the average utility obtained by different algorithms 
with increasing values of $\b$. The result is the average of 20 repeated experiments. For  \algname{ProjGrad}, we
plotted the curves for three different values of $\alpha_p$. 
One can
observe that the performance of \algname{ProjGrad} fluctuates with
different step sizes. With the best-tuned step size,
\algname{ProjGrad} performs close to the \algname{Frank-Wolfe}
variant.

\paragraph{Optimal budget allocation.}
As our real-world experiments, we used the Yahoo! Search Marketing Advertiser Bidding Data\footnote{\url{https://webscope.sandbox.yahoo.com/catalog.php?datatype=a}}, which consists of 1,000 search keywords, 10,475 customers and 52,567 edges. We considered the 
frequency of (keyword, customer) pairs to estimate the influence probabilities, and
used the average of the bidding prices to put a limit on the budget of each advertiser. 
Since the \algname{Random} sampling was too slow, we compared with the
\algname{RandomCube} method.
  Figures \ref{fig_inf} and \ref{fig_inf2} show the value of the utility function (influence) when varying the budget on volume of ads and on budget of advertisers, respectively.
  Again, we observe that the  \algname{Frank-Wolfe} variant  outperforms the other baselines, and the performance of  \algname{ProjGrad} highly depends on the choice of the step size.

\subsection{Results for non-monotone maximization}

\setkeys{Gin}{width=0.5\textwidth}
\begin{figure}[ht]
	\subfloat[\algname{DoubleGreedy} utility  \label{fig_nnqp1k_ite}]{%
		\hspace{-.3cm}
		\includegraphics[]{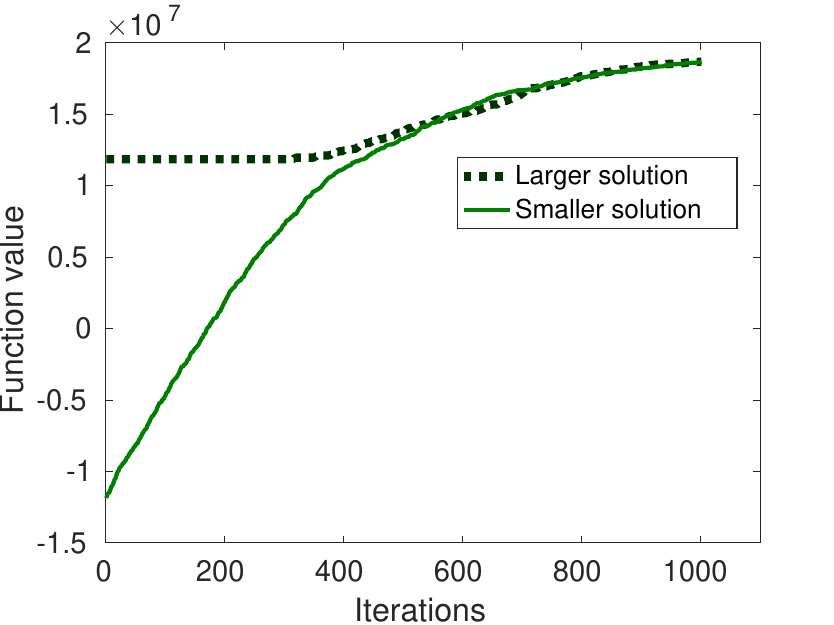}
	}
	\hspace{-.2cm}
	\subfloat[Non-monotone NQP \label{fig_nnqp1k}]{%
		\includegraphics[]{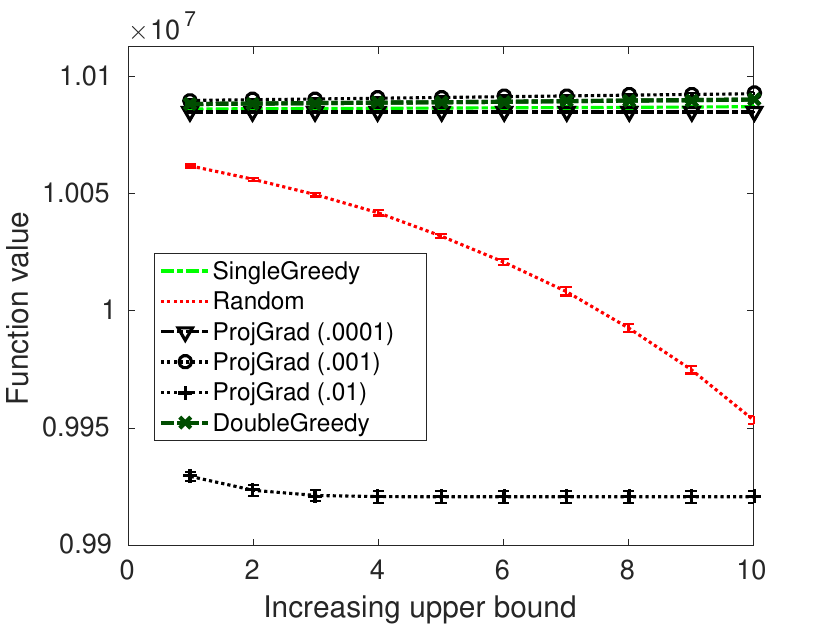}
	}\\
	\subfloat[$\alpha =\beta = \gamma= 10$ \label{fig_revenue2}]{%
		\hspace{-.3cm}
		\includegraphics[]{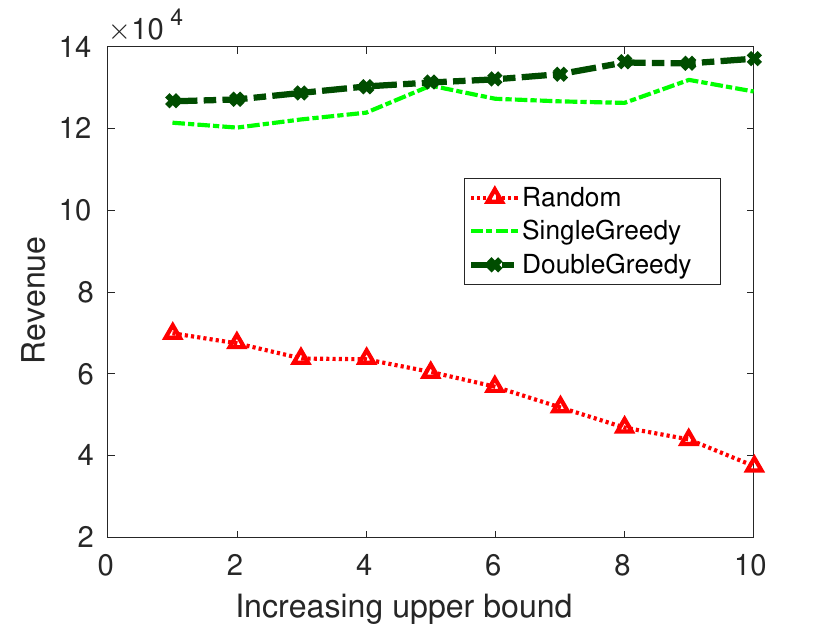}
	}
	\subfloat[$ \alpha =10, \beta  = 5, \gamma = 10$ \label{fig_revenue3}]{%
		\hspace{-.3cm}
		\includegraphics[]{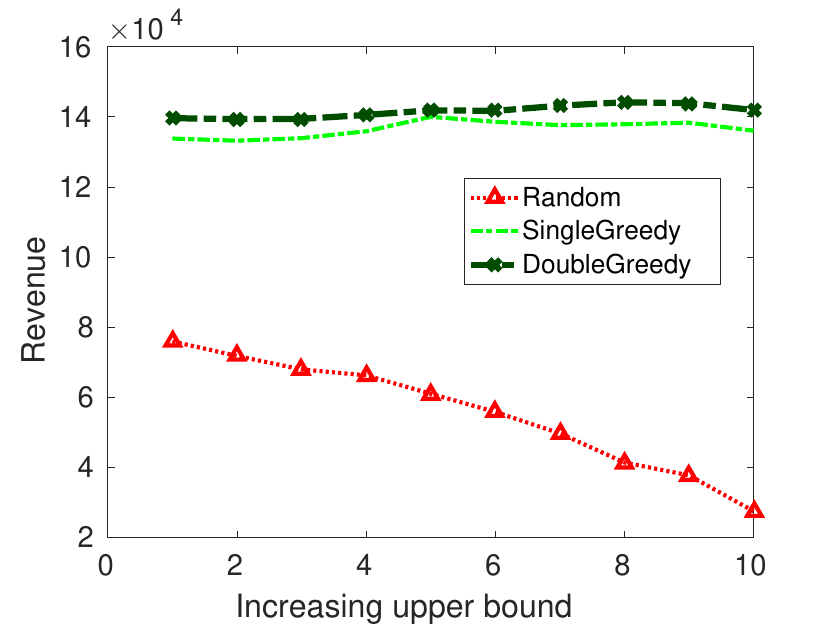}
	}
	\caption{Non-monotone experiments. a) Function
		values of the two intermediate solutions of
		\algname{DoubleGreedy} in each iteration; b)
		Non-monotone NQP function value w.r.t. different
		upper bounds; c, d) Revenue returned with
		different upper bounds $\bar \bu$ on the Youtube social network dataset. }
\end{figure}

\paragraph{Non-monotone submodular NQP.}  We randomly generated non-monotone
submodular NQP functions of the form
$f(\x) = \frac{1}{2} \x^{\trans} \bmH \x + \h^{\trans} \x +c $, where $\bmH\in \R^{n \times n}$
is a sparse matrix with \textit{uniformly} distributed  non-positive off-diagonal entries in $[-10,0]$, $n=1000$. We considered a matrix for which  around
 50\% of the eigenvalues are positive and the other 50\% are negative. We set $\h = -0.2*\bmH^{\trans} \bar \bu$ to make $f(\x)$ non-monotone. 
We then  selected a value for $c$ such that
$f(0) + f(\bar \bu) \geq 0$.  \algname{ProjGrad} was executed for $n$ iterations,  with empirically tuned step sizes.  For the
\algname{Random} method we set $k_s = 1,000$.  \FIG
\ref{fig_nnqp1k_ite} shows the utility of the two intermediate
solutions maintained by \algname{DoubleGreedy}.  One can
observe that they both increase in each iteration.  \FIG
\ref{fig_nnqp1k} shows the values of the utility function for varying  upper
bound $\bar \bu$. The result is the average over 20 repeated experiments.  We can see that \algname{DoubleGreedy} has strong approximation performance, while \algname{ProjGrad}'s results depend on the choice of the step size. With carefully hand-tuned step size,
its performance is comparable to  \algname{DoubleGreedy}.

\paragraph{Revenue maximization.}
Without loss of generality, we considered maximizing the revenue from
selling one product (corresponding to $q=1$, see \SUPP\ \ref{supp_revenue} for more
details on this model).  It can be observed that  the objective in  \EQ \ref{eq_re} is generally non-smooth and \textit{discontinuous}
at any point $\x$ which contains the element of $0$. Since the subdifferential 
can be empty, we cannot use the subgradient-based method and could not
compare with \algname{ProjGrad}.
We performed our experiments on the top 500 largest communities of the
YouTube social
network\footnote{\url{http://snap.stanford.edu/data/com-Youtube.html}}
consisting of 39,841 nodes and 224,235 edges. The edge weights were
assigned according to a uniform distribution $U(0, 1)$.  See \FIG
\ref{fig_revenue2}, \ref{fig_revenue3} for an illustration of revenue
for varying upper bound ($\bar \bu$) and different combinations of the
parameters $(\alpha, \beta, \gamma)$ in the model (\EQ \ref{eq_re}).
For different values of the upper bound,  \algname{DoubleGreedy}
outperforms the other baselines, while \algname{SingleGreedy} maintaining
only one intermediate solution obtained a lower
utility than \algname{DoubleGreedy}.

\section{Conclusion} 
\label{sec_discu}


 

In this paper, we characterized submodular continuous functions, and
proposed two approximation algorithms to efficiently maximize them. In
particular, for maximizing monotone DR-submodular continuous functions subject to general down-closed convex constraints, we proposed
a $(1-1/e)$-approximation algorithm, and for maximizing non-monotone submodular
continuous functions subject to a box constraint, we proposed a 1/3-approximation algorithm. We
demonstrate the effectiveness of our algorithms through a set of
experiments on real-world applications, including budget allocation,
revenue maximization, and non-convex/non-concave quadratic programming, and show that our
proposed methods outperform the baselines in all the
experiments. 
This work demonstrates that submodularity can 
ensure guaranteed optimization in the continuous setting for  problems with (generally) non-convex/non-concave objectives.

\subsubsection*{Acknowledgments}

The authors would like to thank Martin Jaggi for valuable discussions.
This research was partially supported by  ERC StG 307036 and the
Max Planck ETH Center for Learning Systems.

 \clearpage
\bibliographystyle{plainnat}
{
\small
\bibliography{bib_sfm_cont}
}
\newpage
\appendix
%
\begin{center}
\LARGE
\textbf{Appendix}
\end{center}

\section{Proofs of properties of submodular continuous functions}\label{app_proof}

Since $\X_i$ is a compact subset
of $\R$, we denote its lower bound and upper bound to be $\underline{u}_i$ and
$\bar u_i$, respectively in this section. 

\subsection{Alternative formulation of the \NEWDR\ property}

First of all, we will prove that \NEWDR\ has the following  alternative 
	 formulation, which will be used to prove Proposition \ref{lemma_support_dr}.
	 \begin{lemma}[Alternative formulation of \NEWDR]
	 	The  \NEWDR\ property (\EQ \ref{def_supp_dr2}, denoted as \texttt{Formulation I})  has the following equilvalent formulation 
	 	 (\EQ \ref{def_supp_dr}, denoted as  \texttt{Formulation II}): 
	$\forall \a\leq \b\in \X$, $\forall i\in \{i'|a_{i'} = b_{i'}=\underline{u}_{i'} \}, \forall k'\geq l'\geq 0$ s.t. $(k'\chara_i +  \a)$, $(l'\chara_i +  \a)$,  $(k'\chara_i +  \b)$  and $(l'\chara_i +  \b)$ are still in $\X$,  the following inequality is satisfied,
	 		\begin{equation} \label{def_supp_dr}
	 			f(k'\chara_i +  \a) - f(l'\chara_i +  \a) \geq f(k'\chara_i+ \b) - f(l'\chara_i+  \b)     \quad \texttt{(Formulation II)}
	 		\end{equation}
	 \end{lemma}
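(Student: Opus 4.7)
The plan is to prove both implications by a direct change of variables along the $i$-th coordinate, translating $\a$ and $\b$ by appropriate multiples of $\chara_i$ so as to convert between the two formulations.

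First I would prove that \texttt{Formulation I} implies \texttt{Formulation II}. Given $\a\leq\b\in\X$ with $a_i = b_i = \underline{u}_i$ and $k'\geq l'\geq 0$ admissible, I would define $\tilde{\a} := l'\chara_i + \a$ and $\tilde{\b} := l'\chara_i + \b$. These satisfy $\tilde{\a}\leq\tilde{\b}$, they still lie in $\X$ (since $(l'\chara_i+\a),(l'\chara_i+\b)\in\X$ by hypothesis), and $\tilde{a}_i = \tilde{b}_i = \underline{u}_i + l'$. Applying \texttt{Formulation I} to $\tilde{\a},\tilde{\b}$ at coordinate $i$ with increment $k := k'-l' \geq 0$ directly yields
\[
f(k'\chara_i+\a) - f(l'\chara_i+\a) \;\geq\; f(k'\chara_i+\b) - f(l'\chara_i+\b),
\]
which is exactly the inequality in \texttt{Formulation II}.

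For the reverse direction, given $\a\leq\b\in\X$ with $a_i = b_i$ (some common value, not necessarily $\underline{u}_i$) and $k\geq 0$ admissible for \texttt{Formulation I}, I would define shifted vectors $\a' := \a - (a_i - \underline{u}_i)\chara_i$ and $\b' := \b - (b_i - \underline{u}_i)\chara_i$. Since $a_i = b_i$, the same shift is applied to both, so $\a'\leq \b'$; moreover $a'_i = b'_i = \underline{u}_i$, and both vectors are in $\X$. Now set $k' := k + (a_i - \underline{u}_i)$ and $l' := a_i - \underline{u}_i$, so $k'\geq l'\geq 0$. By construction,
\[
k'\chara_i + \a' = k\chara_i + \a, \quad l'\chara_i + \a' = \a, \quad k'\chara_i + \b' = k\chara_i + \b, \quad l'\chara_i + \b' = \b,
\]
all of which lie in $\X$ by the admissibility hypothesis of \texttt{Formulation I}. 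Applying \texttt{Formulation II} to $\a',\b',k',l'$ then gives exactly the desired inequality $f(k\chara_i+\a) - f(\a) \geq f(k\chara_i+\b) - f(\b)$.

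There is really no hard step here; the only thing to be careful about is verifying that all the translated points remain inside $\X$, which follows coordinate-by-coordinate from the compact-product structure of $\X$ together with the admissibility assumptions in each formulation (the only coordinate that changes is the $i$-th, and one just checks the endpoints $\underline{u}_i$, $a_i$, $k+a_i$, etc., are all in $\X_i$). The main conceptual point is simply that \texttt{Formulation II} differs from \texttt{Formulation I} only by (a) normalizing the common coordinate to the lower bound and (b) comparing two arbitrary heights $k'\geq l'$ instead of a single increment from zero; both modifications are absorbed by an affine reparametrization along $\chara_i$.
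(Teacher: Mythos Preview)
Your proposal is correct and follows essentially the same approach as the paper: both directions are handled by an affine reparametrization along $\chara_i$ (shifting by $l'$ in one direction, shifting down to $\underline{u}_i$ in the other). The only cosmetic difference is that the paper, for the direction \texttt{Formulation II} $\Rightarrow$ \texttt{Formulation I}, explicitly splits into the cases $a_i=b_i=\underline{u}_i$, $\underline{u}_i<a_i=b_i<\bar u_i$, and $a_i=b_i=\bar u_i$, whereas you handle all three uniformly via the single shift $\a\mapsto \a-(a_i-\underline{u}_i)\chara_i$; your uniform argument subsumes the paper's case analysis.
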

\begin{proof}

Let $D_1 = \{i| a_i = b_i = \underline{u}_i \}$, $D_2 = \{i|\underline{u}_i < a_i = b_i < \bar u_i \}$, and $D_3 = \{i| a_i = b_i = \bar u_i \}$.

1) \texttt{Formulation II} $\Rightarrow$  \texttt{Formulation I}

When $i\in D_1$, set $l' = 0$ in \texttt{Formulation II} one can
get $f(k'\chara_i+ \a) - f(\a) \geq f(k'\chara_i+ \b) - f(\b)$.

When $i\in D_2$, $\forall k\geq 0$,  let $l' = a_i - \underline{u}_i = b_i- \underline{u}_i >0,$ $k' = k + l' = k +(a_i - \underline{u}_i)$,
and let $\bar \a = (\sete{a}{i}{\underline{u}_i}), \bar \b = (\sete{b}{i}{\underline{u}_i})$. It is easy to see that $\bar \a \leq \bar \b$, and $\bar a_i = \bar{b}_i = \underline{u}_i$. Then from \texttt{Formulation II},
\begin{flalign}\notag 
& f(k'\chara_i + \bar \a) - f(l'\chara_i + \bar \a) = f(k\chara_i +  \a) - f(\a) \\\notag 
& \geq f(k'\chara_i + \bar \b) - f(l'\chara_i + \bar \b) =   f(k\chara_i +  \b) - f(\b).
\end{flalign}
When $i\in D_3$, \EQ \ref{def_supp_dr2} holds trivially.

The above three situations  proves the \texttt{Formulation I}.

2) \texttt{Formulation II} $\Leftarrow$  \texttt{Formulation I}

$\forall \a\leq \b$, $\forall i\in D_1$, one has
$a_i = b_i = \underline{u}_i$. 
$\forall k'\geq l' \geq 0$, let $\hat \a = l'\chara_i + \a, \hat \b = l'\chara_i + \b$, 
let $k = k'-l' \geq 0$, it can be verified that $\hat \a\leq \hat \b$ and $\hat a_i = \hat b_i$, from \texttt{Formulation I},
\begin{flalign}\notag 
&f(k\chara_i + \hat \a) - f(\hat \a) = f(k'\chara_i + \a) - f(l'\chara_i + \a)\\\notag 
\geq & f(k\chara_i + \hat \b) - f(\hat \b) = f(k'\chara_i + \b) - f(l'\chara_i + \b)
\end{flalign}
which proves \texttt{Formulation II}.
\end{proof}	 

\subsection{Proof of Proposition \ref{lemma_support_dr}}

\begin{proof}

	1)  \texttt{submodularity}
	 $\Rightarrow$  \texttt{weak DR}:

Let us prove the \texttt{Formulation II} (\EQ \ref{def_supp_dr}) of \texttt{weak DR}, which is, 

 	$\forall \a\leq \b\in \X$, $\forall i\in \{i'|a_{i'} = b_{i'}=\underline{u}_{i'} \}, \forall k'\geq l'\geq 0$,   the following inequality holds,
	\begin{equation}\notag  
		f(k'\chara_i+ \a) - f(l'\chara_i + \a) \geq f(k'\chara_i+ \b) - f(l' \chara_i + \b).
	\end{equation}

	And 
	$f$ is a submodular function iff $\forall \x, \y\in \X$, $f(\x)+f(\y) \geq f( \x \vee \y) + f(\x \wedge \y)$, so $f(\y) - f(\x\wedge \y) \geq f(\x\vee \y) - f(\x)$. 
	
	Now  $\forall \a \leq \b \in\X$, one can 
	set  $\x = l'\chara_i + \b$ and $\y = k'\chara_i + \a$. It can be easily verified that 
	$\x\wedge \y =l'\chara_i + \a$ and $\x\vee \y = k'\chara_i + \b$. 
	Substituting all the above equalities into
	 $f(\y) - f(\x\wedge \y) \geq f(\x\vee \y) - f(\x)$ one can get 	
	 $f(k'\chara_i+ \a) - f(l'\chara_i + \a) \geq f(k'\chara_i+ \b) - f(l' \chara_i + \b)$.

     2) \texttt{submodularity}
     $\Leftarrow$  \texttt{weak DR}:

Let us use \texttt{Formulation I} (\EQ \ref{def_supp_dr2}) of  \texttt{weak DR} to prove the \texttt{submodularity} property.  

	 $\forall \x, \y\in \X$, 
	let  $D := \{e_1, \cdots, e_d\}$ to be the set of elements for which $y_e > x_e$, let $k_{e_i}: = y_{e_i} - x_{e_i}$. 
	  Now set $\a^0 := \x\wedge \y, \b^0 := \x$ and  $\a^i = (\sete{a^{i-1}}{e_i}{y_{e_i}}) = k_{e_i}\chara_i + \a^{i-1}, \b^i = (\sete{b^{i-1}}{e_i}{y_{e_i}}) = k_{e_i}\chara_i + \b^{i-1}$, for $i = 1, \cdots,  d$. 
	  One can verify that 
	   $\a^i\leq \b^i, a^i_{e_{i'}} = b^i_{e_{i'}}$ for all $i'\in
	 	 D, i=0, \cdots, d$, and
	 	 that $\a^d = \y, \b^d = \x\vee \y$.

	  Applying   \EQ \ref{def_supp_dr2} of the \texttt{weak DR} property for  $i = 1,\cdots, d$ one can get
	 \begin{flalign}\notag 
	 &f(k_{e_1}\chara_{e_1} +  \a^0) - f(\a^0) \geq  f(k_{e_1}\chara_{e_1} + \b^0) - f(\b^0) \\\notag 
	 &f(k_{e_2}\chara_{e_2} + \a^1) - f(\a^1) \geq  f(k_{e_2}\chara_{e_2} + \b^1) - f(\b^1) \\\notag 
	 &\cdots\\\notag
	 &f(k_{e_d}\chara_{e_d} + \a^{d-1}) - f(\a^{d-1}) \geq  f(k_{e_d}\chara_{e_d} + \b^{d-1}) - f(\b^{d-1}). 
	 \end{flalign}
	 Taking a sum over all the above $d$ inequalities, one can get
	 \begin{flalign}\notag 
	 & f(k_{e_d}\chara_{e_d} + \a^{d-1}) - f(\a^{0}) \geq  f(k_{e_d}\chara_{e_d} + \b^{d-1}) - f(\b^{0}) \Leftrightarrow\\\notag 
	 & f(\y) - f(\x\wedge \y) \geq f(\x\vee \y) - f(\x) \Leftrightarrow\\\notag 
	 & f(\x) + f(\y) \geq f(\x\vee \y) + f(\x\wedge \y),
	 \end{flalign}
	 which proves the submodularity. 	 
%
%
%
\end{proof}

\subsection{Proof of Proposition \ref{lemma_dr}}

\begin{proof}

	1) \texttt{submodular} + \texttt{coordinate-wise concave}
		$\Rightarrow$ \texttt{DR}:
	
	From coordinate-wise concavity we have $f(\a + k\chara_i) - f(\a) \geq f(\a+(b_i - a_i + k)\chara_i) - f(\a+(b_i - a_i)\chara_i)$. Therefore, to prove \text{DR} it suffices to show that 
	\begin{flalign}\label{eq_12}
		f(\a+(b_i - a_i + k)\chara_i) - f(\a+(b_i - a_i)\chara_i) \geq f(\b + k\chara_i) - f(\b).
	\end{flalign}
	Let $\x:=\b, \y:=(\a+(b_i - a_i + k)\chara_i)$, so $\x\wedge\y  = (\a+(b_i - a_i)\chara_i), \x\vee \y = (\b + k\chara_i)$.
	From submodularity, one can see that inequality \ref{eq_12} holds.
	
	2) \texttt{submodular} + \texttt{coordinate-wise concave}
			$\Leftarrow$ \texttt{DR}:

	  From \texttt{DR} property, the  \texttt{weak DR} (\EQ \ref{def_supp_dr2}) property is implied, which 
	equivalently proves the \textit{submodularity} property.
	
	To prove \textit{coordinate-wise concavity}, one just need to set $\b:=\a+l\chara_i$, then it reads $f(\a + k\chara_i) - f(\a) \geq f(\a + (k+l)\chara_i) - f(\a + l\chara_i)$. 
\end{proof}

\section{Proofs for the monotone DR-submodular continuous functions maximization}

\subsection{Proof of Proposition \ref{prop_np}}

\begin{proof}

On a high level, the  proof idea follows from the  reduction from the problem of
maximizing a monotone submodular set function subject to cardinality constraints. 

Let us denote $\Pi_1$ as the problem of  maximizing a monotone submodular set function subject to cardinality constraints, and $\Pi_2$ as 
the problem  of maximizing a monotone  DR-submodular continuous function under
general down-closed polytope constraints.
Following \cite{DBLP:journals/siamcomp/CalinescuCPV11}, there exist an algorithm $\A$ for $\Pi_1$
that consists of a polynomial time computation in addition to
polynomial number of subroutine calls to an algorithm for $\Pi_2$. For details
on $\A$ see the following.

First of all, the multilinear extension \citep{calinescu2007maximizing}
of a monotone submodular set function is a monotone 
submodular continuous function, and it is coordinate-wise
linear, thus falls into a special case of monotone DR-submodular continuous functions. 

So the algorithm $\A$ can be: 1) Maximize the multilinear 
extension of the submodular set function over the  
matroid polytope associated with the cardinality constraint, which can be achieved 
 by solving an instance of  $\Pi_2$. 
We call the solution obtained the fractional solution; 2)  Round the fractional 
solution to a feasible integeral solution using polynomial 
time rounding technique in \citet{ageev2004pipage,calinescu2007maximizing} (called  the pipage 
 rounding). Thus we prove the 
 reduction from $\Pi_1$ to $\Pi_2$.

Our reduction algorithm  $\A$ implies the NP­-hardness and
inapproximability of problem $\Pi_2$.

For the NP-hardness, because $\Pi_1$  is well­ known to be NP­-hard
 \citep{calinescu2007maximizing,feige1998threshold}, so $\Pi_2$ is NP-hard as well.

For the inapproximability: Assume there exists a polynomial  algorithm ${\mathscr B}$ that can solve $\Pi_2$
better than $1-1/e$, then we can use ${\mathscr B}$ as the subroutine algorithm in the
reduction, which implies that one can solve $\Pi_1$ better than $1-­
1/e$. Now we slightly adapt the proof of inapproximability on
max­-k­-cover in \citet{feige1998threshold}, since max­-k-­cover is a special case of $\Pi_1$.
According to Theorem 5.3 in \citet{feige1998threshold} and our reduction $\A$, we have a
reduction from approximating 3SAT­-5 to problem $\Pi_2$. Using the
rest proof of Theorem 5.3 in \citet{feige1998threshold}, we reach the result that one cannot
solve $\Pi_2$ better than  $1-1/e$,  unless RP = NP.
%
\end{proof}

\subsection{Proof of Proposition \ref{prop_concave}}\label{supp_prop_concave}

\begin{proof}
Consider a function $g(\xi):= f(\x+\xi \v^*), \xi\geq 0, \v^* \geq 0$. $\frac{d g(\xi)}{d \xi} = \dtp{\v^*}{\nabla f(\x+\xi \v^*)}$.

$g(\xi)$ is concave $\Leftrightarrow$ 
\begin{flalign}\notag 
	\frac{d^2 g(\xi)}{d \xi^2} = (\v^*)^\trans \nabla^2 f(\x+\xi \v^*) \v^* = \sum_{i\neq j} v^*_i v^*_j \nabla^2_{ij} f + \sum_i (v_i^*)^2\nabla_{ii}^2f \leq 0
\end{flalign}
The non-positiveness of $\nabla^2_{ij} f $ is ensured by submodularity of $f(\cdot)$, and the non-positiveness of $\nabla^2_{ii} f $ results from the coordinate-wise concavity of $f(\cdot)$.

The proof of concavity along any non-positive direction is similar, which is omitted here. 
\end{proof}

\subsection{Proof of Lemma \ref{lemma_31}}
\begin{proof}
	It is easy to see that 
	$\x^K$ is a convex linear combination of points in $\P$, so $\x^K\in\P$.
	
	Consider the point $\v^*:=(\x^*\vee \x) - \x = (\x^* - \x)\vee 0\geq 0$. Because $\v^*\leq \x^*$ and $\P$ is down-closed, we get $\v^*\in \P$.
	By monotonicity, $f(\x+\v^*) = f(\x^*\vee \x) \geq f(\x^*)$.
	
	Consider the  function $g(\xi):= f(\x+\xi \v^*), \xi\geq 0$. $\frac{d g(\xi)}{d \xi} =     
	\dtp{\v^*}{\nabla f(\x+\xi \v^*)}$.
	From Proposition \ref{prop_concave}, 
	$g(\xi)$ is concave, hence
	\begin{flalign}\notag 
	g(1) - g(0) = f(\x+\v^*) - f(\x) \leq \frac{d g(\xi)}{d \xi} \Bigr|_{\xi = 0} \times 1 =  \dtp{\v^*}{ \nabla f(\x)}.
	\end{flalign}
	Then one can get
	\begin{flalign}\notag 
	&\dtp{\v}{\nabla f(\x)} \overset{(a)}{\geq} \alpha \dtp{\v^*}{ \nabla f(\x)} -\frac{1}{2}\delta \cg \geq \\\notag 
	&\alpha (f(\x+\v^*) - f(\x)) -\frac{1}{2}\delta \cg \geq \alpha (f(\x^*) -f(\x)) -\frac{1}{2}\delta \cg
	\end{flalign}
	where $(a)$ is from the selection rule  in Step 3 of  \ALG \ref{alg_sfmax_GradientAscend}.
\end{proof}

\subsection{Proof of Theorem \ref{thm_fw}}
\begin{proof}
	From the Lipschitz continuous derivative assumption of $g(\cdot)$ (\EQ \ref{eq_cur}): 
	\begin{flalign}\notag 
	f(\x^{k+1}) - f(\x^k) & = f(\x^k + \gamma_k  \v^k) - f(\x^k) \\\notag 
	&=	g(\gamma_k) - g(0) \\ \notag
	&\geq  \gamma_k   \dtp{\v^k}{\nabla f(\x^k)} - \frac{\cg}{2}\gamma_k^2 \quad (\text{Lipschitz assumption in \EQ \ref{eq_cur}})  \\\notag
	&\geq \gamma_k \alpha [f(\x^*) - f(\x^k)] - \frac{1}{2}\gamma_k\delta \cg - \frac{\cg}{2}\gamma_k^2   \quad (\text{Lemma \ref{lemma_31}})
	\end{flalign}
	After rearrangement, 
	\begin{flalign}\notag
	f(\x^{k+1}) - f(\x^*) \geq (1-\alpha\gamma_k) [f(\x^k) - f(\x^*)]- \frac{1}{2}\gamma_k\delta \cg - \frac{\cg}{2}\gamma_k^2 
	\end{flalign}
	Therefore,
	\begin{flalign}\notag
	f(\x^K) - f(\x^*) \geq  \prod_{k=0}^{K-1} (1-\alpha\gamma_k)[f(0) - f(\x^*)] - \frac{\delta L}{2} \sum_{k=0}^{K-1}\gamma_k - \frac{L}{2} \sum_{k=0}^{K-1}\gamma_k^2   .
	\end{flalign}
One can observe that  $\sum_{k=0}^{K-1}\gamma_k = 1$, and 
since $1-y \leq e^{-y}$ when $y\geq 0$, 
		\begin{flalign}\notag
		 f(\x^*) - f(\x^K)  &\leq   [f(\x^*) - f(0)]e^{-\alpha \sum_{k=0}^{K-1}\gamma_k} + \frac{\delta L}{2} + \frac{L}{2} \sum_{k=0}^{K-1}\gamma_k^2 \\\notag
		 & = [f(\x^*) - f(0)]e^{-\alpha} + \frac{\delta L}{2} + \frac{L}{2} \sum_{k=0}^{K-1}\gamma_k^2.   
		\end{flalign}
	After rearrangement, we get,
	$$f(\x^K)  \geq  (1-1/e^{\alpha})f(\x^*)  
	-\frac{\cg}{2} \sum_{k=0}^{K-1}\gamma_k^2 -  \frac{\cg\delta}{2} + e^{-\alpha}f(0).$$
\end{proof}

\subsection{Proof of Corollary \ref{cor_9}}\label{app_proof_c9}

\begin{proof}
	Fixing $K$, to reach the tightest bound in \EQ \ref{eq8} amounts to solving the following problem:
	\begin{flalign}\notag 
		&\min \sum_{k=0}^{K-1}\gamma_k^2\\\notag
		&\text{ s.t. } \sum_{k=0}^{K-1}\gamma_k = 1, \gamma_k \geq 0.
	\end{flalign}
Using Lagrangian method, let $\lambda$ be the Lagrangian multiplier, then $$L(\gamma_0,
\cdots, \gamma_{K-1}, \lambda) = \sum_{k=0}^{K-1}\gamma_k^2 + \lambda \left[\sum_{k=0}^{K-1}\gamma_k - 1\right].$$
It can be easily verified that when $\gamma_0 = \cdots =\gamma_{K-1} = K^{-1}$, 
$\sum_{k=0}^{K-1}\gamma_k^2$ reaches the minimum (which is $K^{-1}$). Therefore we obtain the tightest worst-case bound in Corollary \ref{cor_9}.
\end{proof}

\section{Proofs 
for the non-monotone submodular continuous functions maximization
	}

\subsection{Proof of Proposition \ref{prop_np2}}

\begin{proof}

The main proof  follows from the  reduction from the problem of
maximizing an unconstrained non-monotone submodular set function.

Let us denote $\Pi_1$ as the problem of  maximizing an unconstrained non-monotone submodular set function, and $\Pi_2$ as 
the problem  of maximizing a box constrained non-monotone   submodular continuous function.
Following the Appendix A of  \cite{buchbinder2012tight}, there exist an  algorithm $\A$ for $\Pi_1$
that consists of a polynomial time computation in addition to
polynomial number of subroutine calls to an algorithm for $\Pi_2$. For details
see the following.

Given a submodular set  function $F: 2^{\groundset}\rightarrow \R_+$, its
 multilinear extension \citep{calinescu2007maximizing}
 is a function $f: [0,1]^\groundset \rightarrow \R_+$, whose value 
 at a point $\x\in [0,1]^\groundset$ is the expected value of $F$ over 
 a random subset $R(\x)\subseteq \groundset$, where $R(\x)$ contains
 each element $e\in \groundset$ independently with probability $x_e$.
 Formally, $f(\x):= \mathbb{E} [R(\x)] = \sum_{S\subseteq \groundset} F(S) \prod_{e\in S} x_e\prod_{e'\notin S}(1-x_{e'})$.
 It can be easily seen that $f(\x)$ is a non-monotone 
submodular continuous function.

Then the algorithm $\A$ can be: 1) Maximize the multilinear extension
$f(\x)$ over the box constraint $[0, 1]^\groundset$, which can be achieved by
solving an instance of $\Pi_2$. Obtain the fractional solution $\hat \x\in [0, 1]^n$; 2) Return the random set $R(\hat \x)$. According to the definition
of multilinear extension, the expected value of $F(R(\hat \x))$ is
$f(\hat \x)$.  Thus proving the reduction from $\Pi_1$ to $\Pi_2$.

Given the reduction, the hardness result follows from the hardness
of unconstrained non-monotone submodular set function maximization. 

The inapproximability result comes from that of the unconstrained non-monotone submodular set function maximization in \cite{feige2011maximizing}  and \cite{dobzinski2012query}.
\end{proof}	

\subsection{Proof of Theorem  \ref{thm_double}}\label{supp_double}

To better illustrate the proof, we reformulate  \ALG \ref{alg_uscfmax_DoubleGreedy} into its  equivalent form in  \ALG \ref{alg_uscfmax_DoubleGreedy_a}, where we split the update into two steps: when $\delta_a\geq \delta_b$, update $\x$ first  
   while keeping $\y$ fixed， and then update $\y$ first while
   keeping $\x$ fixed ($\x^{i}\leftarrow (\sete{x^{i-1}}{e_i}{\hat u_a})$, $\y^{i}\leftarrow \y^{i-1}$;  $\x^{i+1}\leftarrow \x^{i}$, $\y^{i+1}\leftarrow (\sete{y^{i}}{e_i}{\hat u_a})$ ), when $\delta_a < \delta_b$,
   update $\y$ first.  This iteration index change
   is only used to ease the analysis.

   To prove the theorem, we first prove the following Lemmas.

\begin{algorithm}[h]
	\caption{\algname{DoubleGreedy} algorithm  reformulation (for analysis only)}\label{alg_uscfmax_DoubleGreedy_a}
	\KwIn{$\max f(\x)$,  $\x\in [\underline{\bu}, \bar \bu]$,   $f$ is  generally non-monotone,  $f(\underline{\bu}) + f(\bar \bu)\geq 0$}
	{$\x^0 \leftarrow \underline{\bu}$, $\y^0 \leftarrow \bar \bu$\;}
	\For{$i = 1, 3, 5,\cdots, 2n-1$}{
		{find $\hat u_a$ \text{ s.t. }  $f(\sete{x^{i-1}}{e_i}{\hat u_a}) \geq \max_{u_a\in[\underline{u}_{e_i}, \bar u_{e_i}]}  f(\sete{x^{i-1}}{e_i}{u_a}) - \delta$, 
		$\delta_a  \leftarrow  f(\sete{x^{i-1}}{e_i}{\hat u_a}) - f(\x^{i-1})$ \tcp*{$\delta\in [0, \bar \delta]$ is the additive error level. 
		}}
			{find $\hat u_b$ $\text{ s.t. } f(\sete{y^{i-1}}{e_i}{\hat u_b})\geq  \max_{u_b\in[\underline{u}_{e_i}, \bar u_{e_i}]} f(\sete{y^{i-1}}{e_i}{u_b}) - \delta$, 
			$\delta_b  \leftarrow  f(\sete{y^{i-1}}{e_i}{\hat u_b}) - f(\y^{i-1})$ 
			\;}
		\If{$\delta_a\geq \delta_b$}{
			{$\x^{i}\leftarrow (\sete{x^{i-1}}{e_i}{\hat u_a})$, $\y^{i}\leftarrow \y^{i-1}$  \;}
					{$\x^{i+1}\leftarrow \x^{i}$, $\y^{i+1}\leftarrow (\sete{y^{i}}{e_i}{\hat u_a})$  \;}}
			\Else{
		{$\y^{i}\leftarrow (\sete{y^{i-1}}{e_i}{\hat u_b})$, $\x^{i}\leftarrow \x^{i-1}$\;}
				{$\y^{i+1}\leftarrow \y^{i}$, $\x^{i+1}\leftarrow (\sete{x^{i}}{e_i}{\hat u_b})$\;}}
	}
	{Return  $\x^{2n}$ (or $\y^{2n})$ \tcp*{note that  $\x^{2n} = \y^{2n}$}}
\end{algorithm}

Lemma \ref{lemma_42} is used to demonstrate that
the objective value of each intermediate solution
is non-decreasing,
\begin{lemma}\label{lemma_42}
$\forall  i = 1, 2,\cdots, 2n$, one has,
	\begin{flalign}
		f(\x^i) \geq f(\x^{i-1}) -\delta, \;\; f(\y^i) \geq f(\y^{i-1})-\delta.
	\end{flalign}
\end{lemma}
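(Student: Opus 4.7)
My plan is a case analysis on whether $\delta_a\geq \delta_b$ (Case 1) or $\delta_a<\delta_b$ (Case 2), and on which of the two half-updates (step $i$ or step $i+1$) in the reformulated Algorithm \ref{alg_uscfmax_DoubleGreedy_a} is under consideration. In most sub-cases the bound is immediate: whenever the relevant particle does not move (e.g.\ $\y^i=\y^{i-1}$ in Case 1 at step $i$, or $\x^{i+1}=\x^i$ in Case 1 at step $i+1$), the inequality reduces to $0\geq -\delta$. When the updated particle moves its $e_i$-coordinate to its own approximate maximizer, the marginal gain equals $\delta_a$ (resp.\ $\delta_b$); instantiating the approximate-max definition in Step \ref{1d_1} (resp.\ \ref{1d_2}) at the trial value $u_a = x^{i-1}_{e_i}$ (resp.\ $u_b=y^{i-1}_{e_i}$), which is feasible, immediately yields $\delta_a\geq -\delta$ (resp.\ $\delta_b\geq -\delta$).

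The only nontrivial sub-case is when one particle adopts the \emph{other} particle's chosen value. Consider Case 1 at step $i+1$: $\delta_a\geq\delta_b$ and $\y^{i+1}=\sete{y^i}{e_i}{\hat u_a}$ with $\y^i=\y^{i-1}$. I will carry along the inductive invariant $\x^{i-1}\leq \y^{i-1}$, preserved by every update since all assignments lie in $[\underline u_{e_i},\bar u_{e_i}]$, together with the observation that coordinate $e_i$ has not yet been processed, so $x^{i-1}_{e_i}=\underline u_{e_i}$ and $y^{i-1}_{e_i}=\bar u_{e_i}$. The key move is to apply the $0^{\text{th}}$-order submodular inequality to the incomparable pair
\[
\x := \sete{x^{i-1}}{e_i}{\bar u_{e_i}}, \qquad \y := \sete{y^i}{e_i}{\hat u_a}.
\]
A coordinate-wise check (using the invariant and $\hat u_a\leq \bar u_{e_i}$) shows $\x\vee\y = \y^i$ and $\x\wedge\y = \x^i = \sete{x^{i-1}}{e_i}{\hat u_a}$. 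Submodularity $f(\x)+f(\y)\geq f(\x\vee\y)+f(\x\wedge\y)$ thus rearranges to
\[
f(\y^{i+1}) - f(\y^i) \;\geq\; \delta_a \;-\; \bigl[f(\sete{x^{i-1}}{e_i}{\bar u_{e_i}}) - f(\x^{i-1})\bigr].
\]
Finally, invoking the approximate-max definition of $\delta_a$ at the trial value $u_a = \bar u_{e_i}$ bounds the bracket above by $\delta_a+\delta$, giving $f(\y^{i+1})-f(\y^i)\geq -\delta$.

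The symmetric sub-case (Case 2 at step $i+1$, where $\x^{i+1} = \sete{x^i}{e_i}{\hat u_b}$) is handled by an analogous choice: apply submodularity to $\sete{x^i}{e_i}{\hat u_b}$ and $\sete{y^{i-1}}{e_i}{\underline u_{e_i}}$, verify that their join is $\y^i$ and their meet is $\x^i$, and use the approximate-max definition of $\delta_b$ at the trial value $u_b=\underline u_{e_i}$ to bound the correction term by $\delta_b+\delta$. I expect the main obstacle to be spotting the precise incomparable pair whose join and meet recover $\y^i$ and $\x^i$; once that pair is identified, verifying the join/meet and invoking the approximate-max property at a boundary value is routine bookkeeping.
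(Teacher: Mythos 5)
Your proof is correct and follows essentially the same route as the paper's: the nontrivial sub-cases are handled by applying the $0^{\text{th}}$-order submodular inequality to exactly the same incomparable pairs (e.g.\ $\sete{x^{i-1}}{e_i}{\bar u_{e_i}}$ and $\sete{y^{i-1}}{e_i}{\hat u_a}$, whose join is $\y^{i-1}$ and meet is $\x^i$) and then invoking the approximate-max property at a boundary trial value. The only (harmless) difference is in the easy sub-case: you get $\delta_a\geq-\delta$ directly by testing $u_a=x^{i-1}_{e_i}$, which is slightly more elementary than the paper's derivation via $\delta_a+\delta_b\geq-2\delta$ combined with $\delta_a\geq\delta_b$.
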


\begin{proof}[Proof of Lemma \ref{lemma_42}]
	Let $j:= e_i$ be the coordinate that is going to be changed. From submodularity,
	\begin{flalign}\notag
		f(\sete{x^{i-1}}{j}{\bar u_j}) + f(\sete{y^{i-1}}{j}{\underline{u}_j}) \geq f(\x^{i-1}) + f(\y^{i-1})
	\end{flalign}
	So one can verify that $\delta_a + \delta_b \geq -2\delta$. Let us consider the 
	following two situations: 
	
	\textcolor{emp_color}{1)} If $\delta_a \geq  \delta_b$, $\x$ is changed first.
	
	We can see that the Lemma holds for the first change (where $\x^{i-1}\rightarrow \x^i, \y^i = \y^{i-1}$).  For the second change,  we are left to prove $f(\y^{i+1}) \geq f(\y^i) -\delta$.
	From submodularity:
	\begin{flalign}
	f(\sete{y^{i-1}}{j}{\hat u_a}) + f(\sete{x^{i-1}}{j}{\bar u_j}) \geq f(\sete{x^{i-1}}{j}{ \hat u_a }) + f(\y^{i-1})
	\end{flalign}
	Therefore, $f(\y^{i+1}) - f(\y^i) \geq f(\sete{x^{i-1}}{j}{\hat u_a}) -  f(\sete{x^{i-1}}{j}{ \bar u_j}) \geq -\delta$, the last inequality comes from the selection rule of $\delta_a$ in the algorithm.
		
		\textcolor{emp_color}{2)} Otherwise,  $\delta_a <  \delta_b$, $\y$ is changed first.

The Lemma holds for the first change ($\y^{i-1}\rightarrow \y^i, \x^i = \x^{i-1}$). 
For the second change,  we are left to prove $f(\x^{i+1}) \geq f(\x^i) - \delta$. From
submodularity,
\begin{flalign}
		f(\sete{x^{i-1}}{j}{\hat u_b}) + f(\sete{y^{i-1}}{j}{\underline{u}_j}) \geq f(\sete{y^{i-1}}{j}{\hat  u_b}) + f(\x^{i-1}), 
\end{flalign}
So $f(\x^{i+1}) - f(\x^i) \geq f(\sete{y^{i-1}}{j}{\hat u_b}) -  f(\sete{y^{i-1}}{j}{\underline{u}_j}) \geq -\delta$, the last inequality also comes from the selection rule of $\delta_b$.
\end{proof}

Let 
$OPT^i := (\opt\vee \x^i)\wedge \y^i$, it is easy to observe that $OPT^0 = \opt$ and $OPT^{2n} = \x^{2n} = \y^{2n}$. 
\begin{lemma}\label{lemma_43}
$\forall  i = 1, 2,\cdots, 2n$, it holds, 
	\begin{flalign}
		f(OPT^{i-1}) - f(OPT^{i}) \leq f(\x^i) - f(\x^{i-1}) + f(\y^i) - f(\y^{i-1}) +2\delta.
	\end{flalign}
\end{lemma}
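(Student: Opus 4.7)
My plan is a case analysis on iteration $i$, split by (a) whether the pair of sub-iterations sharing coordinate $j$ falls under Case A ($\delta_a \geq \delta_b$, $\x$-then-$\y$) or Case B ($\delta_a < \delta_b$); (b) whether $i$ is odd (first sub-iteration) or even (second sub-iteration) within that pair; and (c) the position of $\opt_j$ relative to the value placed at coordinate $j$. Each sub-iteration changes only one of $\x, \y$ and only at $j$, so $OPT$ can change only at $j$, and at most one of the two gap terms $f(\x^i)-f(\x^{i-1})$, $f(\y^i)-f(\y^{i-1})$ is non-zero.

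The easy sub-cases are those in which $OPT^i = OPT^{i-1}$: in Case A, odd $i$ with $\opt_j \geq \hat u_a$ and even $i$ with $\opt_j < \hat u_a$. The left-hand side is $0$; the non-zero gap term on the right is $\geq -\delta$ by Lemma \ref{lemma_42} (or equivalently by the $\delta$-approximate maximality of $\hat u_a$ or $\hat u_b$ evaluated at the trivial choice $u = x^{i-1}_j$ or $u = y^{i-1}_j$), so the right-hand side is at least $\delta \geq 0$.

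The two non-trivial Case A sub-cases are handled by the \NEWDR\ property of Proposition \ref{lemma_support_dr}. First, at odd $i$ with $\opt_j < \hat u_a$, coordinate $j$ of $OPT$ jumps up from $\opt_j$ to $\hat u_a$; I apply weak DR to $\a := OPT^{i-1}$ and $\b := \sete{y^{i-1}}{j}{\opt_j}$ (so that $\a \leq \b$ and $a_j = b_j$) with increment $k := \hat u_a - \opt_j$, giving
\[
f(OPT^{i-1}) - f(OPT^i) \ \leq\ f(\sete{y^{i-1}}{j}{\opt_j}) - f(\y^{i+1}).
\]
Approximate maximality of $\hat u_b$ at $\y^{i-1}$ bounds $f(\sete{y^{i-1}}{j}{\opt_j}) - f(\y^{i-1}) \leq \delta_b + \delta$, Lemma \ref{lemma_42} gives $f(\y^{i+1}) - f(\y^{i-1}) \geq -\delta$, and in Case A $\delta_b \leq \delta_a$, combining to $\delta_a + 2\delta$. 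Second, at even $i$ with $\opt_j \geq \hat u_a$, coordinate $j$ of $OPT$ drops from $\opt_j$ to $\hat u_a$; here the natural weak DR increment is downward, so I invoke the alternative formulation of weak DR (equation \eqref{def_supp_dr} in Appendix \ref{app_proof}) with $\a := \x^{i-2}$, $\b$ equal to $OPT^{i-1}$ with its $j$-th coordinate reset to $\underline u_j$, $k' := \opt_j - \underline u_j$ and $l' := \hat u_a - \underline u_j$. This gives $f(OPT^{i-1}) - f(OPT^i) \leq f(\sete{x^{i-2}}{j}{\opt_j}) - f(\x^{i-1}) \leq \delta$ by approximate maximality of $\hat u_a$, and the right-hand side $f(\y^i)-f(\y^{i-1})+2\delta$ dominates via Lemma \ref{lemma_42}.

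Case B is fully symmetric: the roles of $\x$ and $\y$ swap, $\hat u_a$ is replaced by $\hat u_b$, and the upward-jump arguments become downward moves from $\bar u_j$ to $\hat u_b$. The main obstacle I anticipate is choosing the correct vector pair for each weak DR application---naive pairings either make the two vectors comparable (yielding a trivial identity on the $\vee/\wedge$) or produce an inequality in the wrong direction. Weak DR is essential because, unlike plain submodularity on a $\vee/\wedge$ pair, it directly compares a gain at a smaller vector to a gain at a larger one, and thus delivers the required upper bound on $OPT$'s loss.
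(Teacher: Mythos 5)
Your proof is correct and follows essentially the same route as the paper's: the same case split on which of $\x,\y$ moves and on the position of $\opti{j}$, the same key inequality (your weak-DR application to $OPT^{i-1}$ versus the modified $\y^{i-1}$ is exactly the paper's submodularity inequality on the lattice pair $OPT^{i}$ and $\sete{y^{i-1}}{j}{\opti{j}}$), and the same use of the selection rules for $\delta_a,\delta_b$ together with Lemma \ref{lemma_42}. The only cosmetic differences are that you chain the inequalities directly where the paper argues by contradiction, and that you delegate Case B to symmetry where the paper instead delegates the sub-iterations in which $\y$ is the solution being updated.
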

Before proving Lemma \ref{lemma_43}, 
let us get some intuition about it. 
We can see that
when changing $i$ from 0 to $2n$, the objective value
changes from the optimal value $f(\opt)$ to the value returned by the algorithm: $f(\x^{2n})$. Lemma \ref{lemma_43} is then used to bound the objective loss
from the assumed optimal objective in each iteration. 

\begin{proof} 

	Let $j:= e_i$ be the coordinate that will be changed. 
	
	 First of all, let us assume $\x$ is changed, $\y$ is kept unchanged ($\x^i \neq \x^{i-1}, \y^i = \y^{i-1}$), this could happen in four situations: \textcolor{emp_color}{1.1)} $x^i_j \leq \opti{j}$ and  $\delta_a\geq \delta_b$; \textcolor{emp_color}{1.2)} $x^i_j \leq \opti{j}$ and $\delta_a < \delta_b$; \textcolor{emp_color}{2.1)} $x^i_j >  \opti{j}$ and $\delta_a\geq \delta_b$;  \textcolor{emp_color}{2.2)} $x^i_j > \opti{j}$ and $\delta_a < \delta_b$.
	 Let us
	 	prove the four situations one by one. 
	
	\textbf{If} $x^i_j \leq \opti{j}$,  the Lemma holds in the following two situations: 
	
 {\textcolor{emp_color}{\bf 1.1)}}	 When $\delta_a \geq \delta_b$, it happens in
	the first change: $x^i_j =  \hat u_a \leq \opti{j}$, so $OPT^i = OPT^{i-1}$; According to
		Lemma \ref{lemma_42}, $\delta_a + \delta_b \geq -2\delta$, so 
	$f(\x^i) - f(\x^{i-1}) + f(\y^i) - f(\y^{i-1}) +2\delta \geq 0$, so the Lemma holds;

	 \textcolor{emp_color}{\bf 1.2)} When $\delta_a < \delta_b$, it happens in
	the second change: $x^i_j = \hat u_b \leq \opti{j}, y^i_j = y^{i-1}_j = \hat u_b$, and since $OPT^{i-1} = (\opt\vee \x^{i-1})\wedge \y^{i-1}$,  so $OPT^{i-1}_j =\hat u_b$ and $OPT^i_j =\hat u_b$,
	so one still has $OPT^i = OPT^{i-1}$. So it amouts to prove
	that $\delta_a + \delta_b \geq -2\delta$, which is true according to
	Lemma \ref{lemma_42}.
	
	\textbf{Else if} $x^i_j > \opti{j}$, it holds that $OPT^i_j = x^i_j$, all other coordinates of $OPT^{i-1}$
	remain unchanged.  The Lemma holds in the following two situations: 
	
	\textcolor{emp_color}{\bf 2.1)} When $\delta_a \geq \delta_b$, it happens in the 
	first change. One has $OPT^i_j = x^i_j = \hat u_a$,
	  $x^{i-1}_j = \underline{u}_j$, so $OPT^{i-1}_j = \opti{j}$. And
	  $x^i_j =\hat u_a > \opti{j}, y^{i-1}_j = \bar u_j$. 
	From submodularity,
	\begin{flalign}\label{eq_a}
	f(OPT^i) + f(\sete{y^{i-1}}{j}{\opti{j}}) \geq f(OPT^{i-1}) + f(\sete{y^{i-1}}{j}{\hat u_a})
	\end{flalign}
	Suppose by virtue of contradiction that,
	\begin{flalign}\label{eq_b}
	f(OPT^{i-1}) - f(OPT^{i}) > f(\x^i) - f(\x^{i-1})  + 2\delta 
	\end{flalign}
	Summing \EQ \ref{eq_a} and \ref{eq_b} we get:
	\begin{flalign}\label{eq_19}
	0 > f(\x^i) - f(\x^{i-1})  + \delta + f(\sete{y^{i-1}}{j}{\hat u_a}) - f(\sete{y^{i-1}}{j}{\opti{j}}) + \delta
	\end{flalign}
	Because $\delta_a\geq \delta_b$ then from the selection rule of $\delta_b$,
	\begin{flalign}\label{eq_20}
	\delta_a =  f(\x^i) - f(\x^{i-1})   \geq \delta_b \geq  f(\sete{y^{i-1}}{j}{c}) - f(\y^{i-1}) - \delta, \forall \underline{u}_j \leq c \leq \bar u_j. 
	\end{flalign}
	Setting $c = \opti{j}$ and substitite (\ref{eq_20}) into (\ref{eq_19}), one can get,
	\begin{flalign}
	0 > f(\sete{y^{i-1}}{j}{\hat u_a}) - f(y^{i-1}) +\delta = f(\y^{i+1}) - f(\y^{i})+\delta,
	\end{flalign}
	which contradicts with Lemma \ref{lemma_42}.
	
	\textcolor{emp_color}{\bf 2.2)} When $\delta_a < \delta_b$, it happens in the second change. $y^{i-1}_j =\hat u_b, x^i_j = \hat u_b > \opti{j}, OPT^i_j =\hat u_b, OPT^{i-1}_j = \opti{j}$. From submodularity,
		\begin{flalign}\label{eq_a1}
		f(OPT^i) + f(\sete{y^{i-1}}{j}{\opti{j}}) \geq f(OPT^{i-1}) + f(\sete{y^{i-1}}{j}{\hat u_b})
		\end{flalign}
		Suppose by virtue of contradiction that,
		\begin{flalign}\label{eq_b1}
		f(OPT^{i-1}) - f(OPT^{i}) > f(\x^i) - f(\x^{i-1}) + 2\delta .
		\end{flalign}
		Summing Equations \ref{eq_a1} and \ref{eq_b1} we get:
		\begin{flalign}
		0 > f(\x^i) - f(\x^{i-1})+\delta  + f(\sete{y^{i-1}}{j}{\hat u_b}) - f(\sete{y^{i-1}}{j}{\opti{j}}) +\delta. 
		\end{flalign}
	From Lemma \ref{lemma_42} we have $f(\x^i) - f(\x^{i-1}) +\delta \geq 0$, so $0 > f(\sete{y^{i-1}}{j}{\hat u_b}) - f(\sete{y^{i-1}}{j}{\opti{j}}) +\delta$, which contradicts with the selection rule of $\delta_b$. 
		
The case when  $\y$ is changed, $\x$ is kept unchanged
			 	is similar, the proof of which is omitted here.
\end{proof}

With Lemma \ref{lemma_43} at hand, one can prove Theorem  \ref{thm_double}: 
Taking a sum over $i$ from 1 to $2n$, one can get,
\begin{flalign}\notag
  f(OPT^0) - f(OPT^{2n}) &\leq f(\x^{2n}) - f(\x^{0}) + f(\y^{2n}) - f(\y^{0}) + 4n \delta  \\\notag
  & =  f(\x^{2n}) + f(\y^{2n})  - (f(\underline{\bu}) + f(\bar \bu)) + 4n \delta\\\notag
  &\leq  f(\x^{2n}) + f(\y^{2n}) + 4n \delta
\end{flalign}
Then it is easy to see that  $f(\x^{2n}) = f(\y^{2n}) \geq \frac{1}{3} f(\opt) - \frac{4n}{3}\delta$.

\section{Details of revenue maximization with continuous assignments}
\label{supp_revenue}
\subsection{More details about the model}
From the discussion in the main text, 
 $R_s(\x^i)$ should  be some non-negative, non-decreasing, submodular function, we set  $R_s(\x^i) := \allowbreak  \sqrt{\sum_{t: x^i_t \neq 0}x^i_t w_{st}}$, where 
 $w_{st}$ is the weight of edge connecting users $s$ and $t$.
The first part in R.H.S. of  \EQ  \ref{eq_re} models the revenue from users who have not received free assignments, while the second and third parts model the revenue from users who have gotten the free assignments.
We use $w_{tt}$ to denote the ``self-activation rate" of user $t$: Given
certain amount of free trail to user $t$, how probable is it that he/she will buy
after the trial. 
The intuition of modeling the second part in R.H.S. of  \EQ  \ref{eq_re}   is: Given the users more free assignments, they are more likely to buy the product after using it. 
Therefore, we model the expected revenue in this part by $\phi(x^i_t) =  w_{tt}x^i_t$; The intuition of modeling the third part in R.H.S. of  \EQ  \ref{eq_re}  is:  Giving the users more free assignments, the revenue could decrease, since   
the users use the product for free for a longer period.
 As a simple example,  the decrease in the revenue can be modeled as  $\gamma \sum_{t:x^i_t\neq 0} -x^i_t$.

\subsection{Proof of Lemma \ref{revenue}}

\begin{proof}
	
	First of all, we prove that $g(\x) : = \sum_{s: x_s =0} R_s(\x)$
	is a non-negative submodular function.
	
It is easy to see that $g(\x)$ is non-negative. 
To prove that $g(\x)$ is submodular, one just need,
\begin{flalign}\label{eq_f}
g(\a) + g(\b) \geq g(\a\vee \b) + g(\a\wedge \b), \quad  \forall \a, \b \in [0, \bar \bu].
\end{flalign}
Let $A:= \spt{\a}, B := \spt{\b}$, where $\spt{\x}:=\{i|x_i\neq 0 \}$ is  the  support of the vector $\x$.
First of all, because $R_s(\x)$ is non-decreasing,  and $\b\geq \a\wedge \b$, $\a\geq \a\wedge \b$, 
\begin{flalign}\label{eq_1}
\sum_{s\in A\backslash B} R_s(\b) + \sum_{s\in B\backslash A} R_s(\a) \geq \sum_{s\in A\backslash B} R_s(\a\wedge \b)  + \sum_{s\in B\backslash A} R_s(\a\wedge \b). 
\end{flalign}
By submodularity of $R_s(\x)$, and  summing over $s\in E \backslash(A\cup B)$,
\begin{flalign}\label{eq_2}
\sum_{s\in E \backslash(A\cup B)}R_s(\a) + \sum_{s\in E \backslash(A\cup B)}R_s(\b) \geq \sum_{s\in E \backslash(A\cup B)}R_s(\a\vee \b) + \sum_{s\in E \backslash(A\cup B)}R_s(\a\wedge \b).
\end{flalign}
Summing Equations  \ref{eq_1} and \ref{eq_2} one can get
\begin{flalign}\notag 
\sum_{s\in E \backslash A}R_s(\a) + \sum_{s\in E \backslash B}R_s(\b) \geq \sum_{s\in E \backslash(A\cup B)}R_s(\a\vee \b) + \sum_{s\in E \backslash(A\cap  B)}R_s(\a\wedge \b)
\end{flalign}
which is equivalent to \EQ \ref{eq_f}.

Then we prove that $h(\x):=\sum_{t: x_t \neq 0} \bar R_t(\x)$ is submodular. 
Because $\bar R_t(\x)$ is non-increasing, and $\a\leq \a\vee \b$, 
$\b \leq \a\vee \b$, 
\begin{flalign}\label{37}
	\sum_{t\in A\backslash B} \bar R_t(\a) + \sum_{t\in B\backslash A} \bar R_t(\b) \geq \sum_{t\in A\backslash B} \bar R_t(\a\vee \b) + \sum_{t\in B\backslash A} \bar R_t(\a\vee \b).
\end{flalign}
By submodularity of $\bar R_t(\x)$, and summing over $t\in A\cap  B$,
\begin{flalign}\label{38}
\sum_{t\in A\cap  B} \bar R_t(\a) + \sum_{t\in A\cap  B} \bar R_t(\b) \geq \sum_{t\in A\cap  B} \bar R_t(\a\vee \b) + \sum_{t\in A\cap  B} \bar R_t(\a\wedge \b).
\end{flalign}
Summing Equations \ref{37}, \ref{38} we get,
\begin{flalign*}
\sum_{t\in A} \bar R_t(\a) + \sum_{t\in  B} \bar R_t(\b) \geq \sum_{t\in A\cup  B} \bar R_t(\a\vee \b) + \sum_{t\in A\cap  B} \bar R_t(\a\wedge \b)
\end{flalign*}
which is equivalent to $h(\a)+h(\b)\geq h(\a\vee \b)+h(\a\wedge \b)$, $\forall \a, \b \in [0, \bar \bu]$, thus proving the submodularity of 
$h(\x)$.

Finally, because $f(\x)$ is the sum of two submodular functions and one 
modular function, so it is submodular.
\end{proof}

\subsection{Solving the 1-D subproblem when applying the \algname{DoubleGreedy} algorithm} \label{1d}
Suppose we are varying $x_j \in [0, \bar u_j]$ to maximize $f(\x)$, notice that this 1-D subproblem is non-smooth and discontinuous at  point $0$. 
First of all, let us leave $x_j = 0$ out, one can see that $f(\x)$ is 
concave and smooth along $\chara_j$ when $x_j\in (0, \bar u_j]$,
\begin{flalign}\notag 
\fracpartial{f(\x)}{x_j} =\alpha \sum_{s\neq j: x_s = 0} \frac{w_{sj}}{2\sqrt{\sum_{t: x_t \neq 0}x_t w_{st}}} - \gamma + \beta w_{jj}\\\notag  
\fracppartial{f(\x)}{x^2_j} = -\frac{1}{4}\alpha \sum_{s\neq j: x_s = 0} \frac{w_{sj}^2}{\left(\sqrt{\sum_{t: x_t \neq 0}x_t w_{st}}\right)^3} \leq 0.
\end{flalign}
Let $\bar f(z)$ be the univariate function when $x_j\in (0, \bar u_j]$, then we
extend the domain of $\bar f(z)$ to be $z\in [0, \bar u_j]$ as,
\begin{flalign}\notag 
\bar f(z) = \bar f(x_j):=  \alpha\!\! \sum_{s\neq j: x_s =0} R_s(\x)  + \beta\!\! \sum_{t\neq j: x_t \neq 0} \phi(x_t)+ \gamma \!\!\! \sum_{t\neq j: x_t \neq 0}\bar R_t(\x) + \beta \phi(x_j) + \gamma \bar R_j(\x).
\end{flalign}
One can see that $\bar f(z)$ is concave and smooth. 
Now to solve the 1-D subproblem, we can first of all solve the smooth concave
1-D maximization problem\footnote{It can be efficienlty solved by various methods, e.g., the bisection method or Newton method.}: $z^* := \argmax_{z\in [0, \bar u_j]} \bar f(z)$, then
compare $\bar f(z^*)$ with the function value at the discontinuous point $0$: $f(\sete{x}{j}{0})$, and return the point with the larger function value.


\section{More applications}\label{supp_more_app}

\paragraph{Maximum coverage.}
In the maximum coverage  problem, there are 
$n$ subsets $C_1,\cdots, C_n$ from the ground
set $V$.  One subset $C_i$ can be chosen
with ``confidence" level $x_i\in [0,1]$, the set of covered elements
when choosing subset $C_i$ with confidence $x_i$
can be modeled with the following monotone normalized covering function: $p_i: \R_+ \rightarrow 2^V, i=1,\cdots, n$. 
 The target is to choose 
subsets from $C_1,\cdots, C_n$ with confidence level 
to maximize the number of covered elements  $|\cup_{i=1}^n p_i(x_i)|$,
at the same time respecting the budget
constraint $\sum_i c_i x_i \leq b$ (where $c_i$ is the cost of choosing subset $C_i$).
This problem generalizes the classical maximum coverage problem. 
It is easy to see that the objective 
function is monotone submodular, and 
the constraint is a down-closed polytope.

\noindent\paragraph{Text summarization.}  Submodularity-based objective functions
for text summarization perform well in practice \citep{lin2010multi}. 
 Let $C$ to be the set of 
all concepts, and $E$ to be the set of all sentences. 
As a typical example,  the
concept-based summarization aims to find a subset $S$ of 
the sentences to maximize the total credit of concepts covered by 
$S$. \cite{soma2014optimal} discussed extending the submodular text summarization
model to the one that incorporates ``confidence" of a sentence,   which has
discrete value,  and modeling 
the objective to be a monotone submodular lattice function.
It is also natural to model the confidence level of  sentence $i$
to be a continuous value $x_i\in [0, 1]$. Let us use $p_i(x_i)$ to denote  the set of covered concepts 
when selecting sentence $i$ with confidence level $x_i$, it  can be  a monotone
covering function $p_i: \R_+ \rightarrow 2^C, \forall i\in E$. 
Then the objective function of the extended model is $f(\x) = \sum_{j\in \cup_i p_i(x_i) } c_j$, where $c_j\in \R_+$ is the credit of  concept $j$. It can be verified that  this objective is 
a monotone submodular continuous function.

\end{document}